\def\eqref#1{equation~\ref{#1}}
\def\1{\bm{1}}
\DeclareMathAlphabet{\mathsfit}{\encodingdefault}{\sfdefault}{m}{sl}
\SetMathAlphabet{\mathsfit}{bold}{\encodingdefault}{\sfdefault}{bx}{n}
\newcommand{\R}{\mathbb{R}}
\DeclareMathOperator*{\argmin}{arg\,min}
\newcommand{\paragraphspaceheight}{0.2cm}
\newcommand{\paragraphspace}{\vspace{\paragraphspaceheight}}
\newcommand{\expect}[1]{\mathop{{}\mathbb{E}}\left[{#1}\right]}
\newcommand{\condexpect}[2]{\mathbb{E}_{#1}\left[{#2}\right]}
\newcommand{\suchthat}{\ensuremath{~\middle|~}}
\newcommand{\knowing}{\suchthat{}}
\newcommand{\norm}[1]{\left\lVert{#1}\right\rVert}
\newcommand{\indexvar}[3]{\ensuremath{{{#3}^{\ifthenelse{\equal{#1}{}}{}{\left({#1}\right)}}_{#2}}}}
\newcommand{\indexvarNoPar}[3]{\ensuremath{{{#3}^{\ifthenelse{\equal{#1}{}}{}{\left{#1}\right}}_{#2}}}}
\newcommand{\params}[2]{\indexvarNoPar{#1}{#2}{\theta}}
\newcommand{\datapoint}[2]{\indexvar{#1}{#2}{x}}
\newcommand{\mechanismNoArg}[0]{\ensuremath{\mathcal{M}}}
\newcommand{\mechanism}[1]{\ensuremath{\mathcal{M}({#1})}}
\providecommand{\iprod}[2]{\ensuremath{\left\langle #1,\,#2  \right\rangle}}
\providecommand{\norm}[1]{\ensuremath{\left\lVert#1\right\rVert }}
\newcommand{\weight}[1]{\params{}{#1}}
\newcommand{\compgrad}[2]{\ensuremath{\nabla{} q\left({#1}, {#2}\right)}}
\newcommand{\gradient}[2]{\indexvar{#1}{#2}{g}}
\newcommand{\worker}[2]{\indexvar{#1}{#2}{w}}
\newcommand{\proba}[2]{\ensuremath{\text{P}\!\left({#1}\ifthenelse{\equal{#2}{}}{}{\knowing{}{#2}}\right)}}
\newcommand{\brute}{\textit{MDA}}
\newcommand{\krum}{\textit{Krum}}
\newcommand{\bulyan}{\textit{Bulyan}}
\newcommand{\median}{\textit{Median}}
\newcommand{\brea}{\textit{BREA}}
\newcommand{\marker}[2]{\paragraphspace\par\noindent\fbox{\parbox{0.95\columnwidth}{\textsc{{#1}:} \textit{#2}}}\paragraphspace\par}
\newcommand{\note}[1]{{\color{blue} \marker{Note}{#1}}}
\newcommand{\includewideplot}[2]{\includegraphics[width=#1,trim={6mm 6mm 6mm 6mm},clip]{\detokenize{plots/#2.png}}}
\renewcommand{\paragraph}[1]{\textbf{#1}~}
\def\R{\mathbb{R}}
\def\V{\upsilon}
\newtheorem{assumption}{Assumption}
\newtheorem{theorem}{\bfseries Theorem}
\newtheorem*{theorem**}{\bfseries Theorem}
\newtheorem{theorem*}{Theorem}
\newtheorem{proposition}{Proposition}
\newtheorem{proposition*}{Proposition}[]
\newtheorem{corollary*}{Corollary}
\newtheorem{corollary}{Corollary}
\newtheorem{definition}{Definition}
\newtheorem{remark}{Remark}
\newtheorem{lemma}{Lemma}
\title{Combining Differential Privacy and \\ Byzantine Resilience in Distributed SGD}
\author{
	Rachid Guerraoui\qquad Nirupam Gupta\qquad Rafael Pinot\\
	Sebastien Rouault \qquad
	John Stephan	\vspace{0.3cm}
\\
	Distributed Computing Lab (DCL) \\ 
	EPFL, Switzerland
% 	Ecole Polytechnique Fédérale de Lausanne (EPFL), Switzerland \\
	}
\date{}
\begin{document}

\maketitle

\note{This paper studies the interplay between Byzantine resilience (BR) and differential privacy in distributed learning using a restrictive model for BR, namely $(\alpha, f)-$BR. Please check this paper~\citep{allouah2023trilemna} for an improved analysis using a more general definition of BR.}

\begin{abstract}
Privacy and Byzantine resilience (BR) are two crucial requirements of modern-day distributed machine learning. The two concepts have been extensively studied individually but the question of how to combine them effectively remains unanswered. This paper contributes to addressing this question by studying the extent to which the distributed SGD algorithm, in the standard parameter-server architecture, can learn an accurate model despite (a) a fraction of the workers being malicious (Byzantine), and (b) the other fraction, whilst being honest, providing noisy information to the server to ensure differential privacy (DP). 
We first observe that the integration of standard practices in DP and BR is not straightforward. In fact, we show that many existing results on the convergence of distributed SGD under Byzantine faults, especially those relying on {\em $(\alpha,f)$-Byzantine resilience}, are rendered invalid when honest workers enforce DP.
To circumvent this shortcoming, we revisit the theory of $(\alpha,f)$-BR to obtain an approximate convergence guarantee. Our analysis provides key insights on how to improve this guarantee through hyperparameter optimization. Essentially, our theoretical and empirical results show that (1) an imprudent combination of standard approaches to DP and BR might be fruitless, but (2) by carefully re-tuning the learning algorithm, we can obtain reasonable learning accuracy while simultaneously guaranteeing DP and BR. 
%We validate our theoretical findings through an exhaustive set of experiments on MNIST and Fashion-MNIST using moderate-sized neural networks. 
\end{abstract}

\section{Introduction}
Distributed machine learning (ML)  has received significant attention in recent years due to the growing complexity of ML models and the increasing computational resources required to train them~\citep{DistributedNetworks2012,TrainingLargemodels2015}. One of the most popular distributed ML settings is the {\em parameter server} architecture, wherein multiple machines (called {\em workers}) jointly learn a single large model on their collective dataset with the help of a trusted {\em server} running the {\em stochastic gradient descent} ({SGD}) algorithm~\citep{SGD}. In this scheme, the server maintains an estimate of the model parameters, which is iteratively updated using stochastic gradients computed by the workers.

Compared to its centralized counterpart, distributed SGD is more susceptible to security threats. One of them is related to the violation of data privacy by an \emph{honest-but-curious} server~\citep{DLG}. Another one is the malfunctioning due to (what is called) \textit{Byzantine} behavior of workers~\citep{ByzProblem,krum}. In the past, significant progress has been made in addressing these issues separately. In the former case, $(\epsilon,\delta)$-{\em differential privacy} (DP) has become a dominant standard for preserving privacy in ML, especially when considering neural networks~\citep{dwork2014algorithmic,abadi2016deep}. In the latter case, {\em $(\alpha,f)$-Byzantine resilience} has emerged as the principal notion for demonstrating the Byzantine resilience (BR) of distributed SGD~\citep{krum, brute_bulyan}. Since DP and BR are two crucial pillars of distributed machine learning, practitioners will inevitably have to build systems satisfying both these requirements. It is thus natural to ask the question: \emph{\textbf{Can we simultaneously ensure DP and BR in distributed ML?}}

In this paper, we take a first step towards a positive answer to this question by studying the resilience of the renowned DP-SGD algorithm \citep{abadi2016deep} against Byzantine workers. More precisely, we consider distributed SGD where, in each learning step, the honest workers inject Gaussian noise to their gradients to ensure $(\epsilon,\delta)$-DP, while the server updates the parameters by applying an $(\alpha,f)$-BR aggregation rule on the received gradients (to protect against Byzantine workers). Upon analyzing the convergence of this algorithm, we show that DP and BR can indeed be combined, but doing so is non-trivial. Our key contributions are summarized below.\\

\paragraph{1. Inapplicability of existing results from the BR literature.} 
We start by highlighting an inherent incompatibility between the supporting theory of $(\alpha,f)$-BR and the Gaussian mechanism used in DP-SGD. Specifically, we show (in Section~\ref{sec:incompatibility}) that the {\em variance-to-norm} (VN) condition,  critical to guarantee $(\alpha,f)$-BR, cannot be satisfied when honest workers enforce $(\epsilon, \, \delta)$-DP via Gaussian noise injection. Hence, existing results on the resilience of distributed SGD to Byzantine workers are not applicable when considering DP-SGD. More generally, this highlights limitations of many existing Byzantine resilient techniques in settings where the stochasticity of the gradients is non-trivial.\\

\paragraph{2. Adapting the theory of BR to account for DP.} To overcome the aforementioned shortcoming, we introduce a relaxation of the VN condition (in Section~\ref{sub:generalize}), namely the $\eta$-{\em approximated} VN condition. By doing so, we (1) generalize existing results from the BR literature and (2) demonstrate {\em approximate} convergence of DP-SGD under Byzantine faults. Our convergence result can be roughly put as follows.\\

\begin{theorem**}[Informal]
Let $Q$ be the loss function of the learning model, and $\theta_t$ be the parameter vector obtained after $t$ steps of our algorithm. If the $\eta$-approximated VN condition holds true, then
\begin{equation*}
    \min_{t \in [T]}\expect{\norm{\nabla Q(\weight{t})}^2} \leq \max\left\{ \eta^2, ~ O \left(\frac{\log{T}}{\sqrt{T}} \right)  \right\}
\end{equation*}
where $[T] = \{1, \ldots, \, T\}$, and $\norm{\cdot}$ denotes the Euclidean norm.
\end{theorem**}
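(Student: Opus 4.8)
The plan is to follow the standard template for nonconvex SGD analysis---an $L$-smoothness descent inequality followed by telescoping---while carefully tracking the slack introduced by the $\eta$-approximated VN condition; throughout I invoke the standing assumptions that $Q$ is $L$-smooth and bounded below by some $Q^\star$. Writing the server update as $\weight{t+1} = \weight{t} - \gamma_t G_t$, where $G_t$ is the output of the $(\alpha,f)$-BR aggregation rule applied to the noisy (and possibly Byzantine) gradients and $\gamma_t$ is the step size, $L$-smoothness gives
\begin{equation*}
\expect{Q(\weight{t+1}) \mid \weight{t}} \leq Q(\weight{t}) - \gamma_t \iprod{\nabla Q(\weight{t})}{\expect{G_t \mid \weight{t}}} + \frac{L\gamma_t^2}{2}\,\expect{\norm{G_t}^2 \mid \weight{t}}.
\end{equation*}

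The crux is to convert the $\eta$-approximated VN condition into a usable lower bound on the inner product and an upper bound on the second moment. Revisiting the $(\alpha,f)$-BR angle argument under the relaxed condition, I expect an approximate alignment guarantee of the form $\iprod{\nabla Q(\weight{t})}{\expect{G_t \mid \weight{t}}} \geq c_1 \norm{\nabla Q(\weight{t})}^2 - c_2\,\eta^2$, for constants $c_1 > 0$ and $c_2 \geq 0$ depending on $\alpha$ and the number of Byzantine workers, together with a second-moment bound $\expect{\norm{G_t}^2 \mid \weight{t}} \leq c_3\bigl(\norm{\nabla Q(\weight{t})}^2 + \sigma^2\bigr)$ inherited from the BR moment inequalities, where $\sigma^2$ absorbs the (non-vanishing) DP noise variance. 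The additive $\eta^2$ slack is exactly what the relaxation costs: near critical points the aggregated vector may cease to point into the correct half-space, and the $\eta$ floor bounds this misalignment.

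Substituting both bounds, taking total expectations, and telescoping over $t = 1, \ldots, T$ with $Q(\weight{1}) - Q^\star$ bounding the telescoped sum, I would rearrange to
\begin{equation*}
c_1 \sum_{t=1}^T \gamma_t \, \expect{\norm{\nabla Q(\weight{t})}^2} \leq Q(\weight{1}) - Q^\star + c_2\,\eta^2 \sum_{t=1}^T \gamma_t + \frac{L c_3}{2}\sum_{t=1}^T \gamma_t^2\bigl(\expect{\norm{\nabla Q(\weight{t})}^2} + \sigma^2\bigr).
\end{equation*}
With the schedule $\gamma_t = \Theta(1/\sqrt{t})$ one has $\sum_{t} \gamma_t = \Theta(\sqrt{T})$ and $\sum_{t} \gamma_t^2 = \Theta(\log T)$; since $\gamma_t \to 0$, the $\gamma_t^2\norm{\nabla Q(\weight{t})}^2$ terms are, term by term, dominated by the matching left-hand-side terms for all but finitely many $t$ and can be absorbed there. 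Dividing by $c_1 \sum_t \gamma_t$ and bounding the minimum by the $\gamma_t$-weighted average yields $\min_{t \in [T]} \expect{\norm{\nabla Q(\weight{t})}^2} \leq \tfrac{c_2}{c_1}\eta^2 + O(\log T / \sqrt{T})$. Since $a + b \leq 2\max\{a,b\}$, this additive bound collapses into the stated $\max\{\eta^2,\, O(\log T/\sqrt{T})\}$ form, folding the constants into the $O(\cdot)$.

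The main obstacle I anticipate is the second step: re-deriving the alignment guarantee under the $\eta$-approximated VN condition. In the original $(\alpha,f)$-BR theory the half-angle bound relies essentially on the variance being small relative to $\norm{\nabla Q(\weight{t})}^2$; once that ratio is controlled only down to the floor $\eta$, one must quantify precisely how the alignment degrades and show that the degradation is captured by a clean additive $\eta^2$ term rather than a quantity that blows up as $\norm{\nabla Q(\weight{t})} \to 0$. A secondary difficulty is controlling $\expect{\norm{G_t}^2 \mid \weight{t}}$ when the gradient norm is not assumed bounded a priori; this is what necessitates the term-by-term absorption above and, through the non-vanishing DP variance $\sigma^2$, is what produces the $\log T$ factor in the rate rather than a constant.
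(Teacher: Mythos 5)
Your overall scaffolding (descent lemma, telescoping with $\gamma_t = \Theta(1/\sqrt{t})$, bounding the minimum by the average) matches the paper's, but the step you flag as ``the main obstacle'' is a genuine gap, and it cannot be repaired in the form you propose. The $\eta$-approximated VN condition is completely vacuous at any parameter $\theta$ with $\norm{\nabla Q(\theta)} \leq \eta$: at such a step the GAR need not be $(\alpha,f)$-Byzantine resilient, so the Byzantine workers can make the aggregate $R_t$ point in an arbitrary direction with arbitrary magnitude. Consequently neither of your two key bounds --- the approximate alignment $\iprod{\nabla Q(\weight{t})}{\expect{R_t}} \geq c_1 \norm{\nabla Q(\weight{t})}^2 - c_2 \eta^2$ nor the second-moment bound $\expect{\norm{R_t}^2} \leq c_3 \left( \norm{\nabla Q(\weight{t})}^2 + \sigma^2 \right)$ --- can be derived from the hypothesis at those steps, for any finite constants; such bounds could at best be established by GAR-specific arguments, not from the $\eta$-approximated VN condition alone. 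A telescoping sum that runs over all $t \in [T]$ therefore contains terms you cannot control, and the argument breaks down exactly where the relaxation bites.

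The paper sidesteps this with a dichotomy rather than an additive-slack lemma. Either some step $t^* \in [T]$ satisfies $\norm{\expect{G(\weight{t^*})}} = \norm{\nabla Q(\weight{t^*})} \leq \eta$, in which case $\min_{t \in [T]} \expect{\norm{\nabla Q(\weight{t})}^2} \leq \eta^2$ immediately and the proof stops (what happens at the remaining steps is irrelevant, since only the minimum is being bounded); or no step does, in which case the strict VN inequality holds at every $t \in [T]$, the GAR is \emph{exactly} $(\alpha,f)$-Byzantine resilient throughout (zero slack, by the prior results of Blanchard et al.\ and El Mhamdi et al.), and the standard telescoping yields the $O(\ln T/\sqrt{T})$ term. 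This structure is also why the theorem states a clean $\max\left\{ \eta^2, O(\ln T/\sqrt{T}) \right\}$: your route, even where a slack lemma could be proved for a specific GAR, would only give $(c_2/c_1)\,\eta^2 + O(\ln T/\sqrt{T})$, leaving a constant in front of $\eta^2$ that cannot be folded into the second argument of the max. Finally, your secondary worry about unbounded gradient norms is moot in this setting: Assumption~\ref{asp:bnd_norm} (clipping) gives $\norm{\nabla q(\theta, x)} \leq C$, so the paper bounds $\expect{\norm{G(\weight{t})}^2} \leq \V^2 + d s^2 + C^2$ by a constant, no term-by-term absorption is needed, and the $\ln T$ factor arises solely from $\sum_{t=1}^{T} \gamma_t^2 \leq 1 + \ln T$.
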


As the aforementioned result suggests, a smaller $\eta$ ensures better convergence. To quantify this convergence guarantee, we present (in Section~\ref{sec:quantify}) necessary and sufficient conditions for the $\eta$-approximated VN condition to hold. Specifically, we show that the condition holds only if $\eta^2 \in \Omega\left(\nicefrac{d\ln(1/\delta)}{b m \epsilon} \right)$ where $d$, $b$, and $m$ denote the model size, batch size, and dataset size respectively. This showcases an important interplay between DP and BR, e.g., larger $\epsilon$ and $\delta$ leads to stronger resilience to Byzantine workers at the expense of weaker privacy.\\

\paragraph{3. From theoretical insights to practical convergence.} 

\begin{wrapfigure}{r}{0.52\textwidth}
 \centering
 \setbox0\hbox
  {\includegraphics[width=0.52\textwidth]{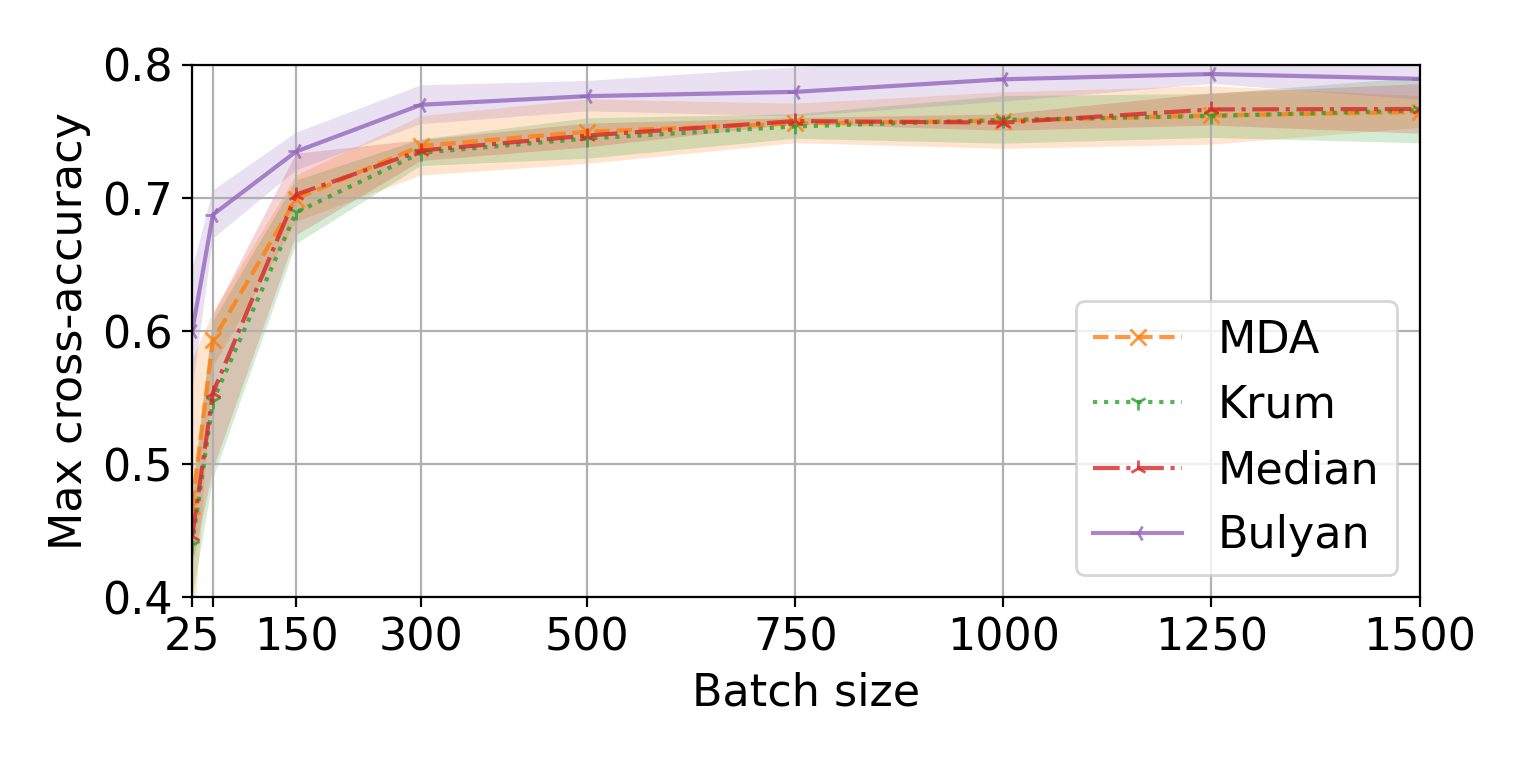}}%
  \ht0=107pt % change this to move it up or down
  \dp0=0pt
  \box0
  \vspace{-11pt} 
  \caption{Impact of the batch size and aggregation rule on the cross-accuracy of DP-SGD against the \textit{little}~\citep{little} attack on Fashion-MNIST.} 
 \label{fig:results_intro}
\end{wrapfigure}
Importantly, our result (in Section~\ref{sec:quantify}) provides key insights on how to better integrate standard approaches to DP and BR using  {\em hyperparameter} {\em optimization} (HPO), e.g., by increasing the batch size $b$, or choosing an appropriate aggregation rule. The improvement is illustrated by a snippet of our experimental results in Figure~\ref{fig:results_intro}. This finding is particularly interesting as these parameters have very little impact in most settings when considering DP or BR separately. We validate our theoretical insights in Section~\ref{experiments} through an exhaustive set of experiments on MNIST and Fashion-MNIST using neural networks.

\subsection*{Closely related prior works}
There has been a long line of research on the interplay between DP and other notions of robustness in ML~\citep{DworkRobustStats2009,MaPoisoningandDP2019,ShokriPrivacyRiskofAdversarialDefense2019,MembershipInferenceinRobustnessShokri2019,Sun2019CanYouBackdoor,DBLP:conf/sp/LecuyerAG0J19,pinot2019unified}. However, previous approaches do not apply to our setting for two main reasons; (1) they do not address the privacy of the dataset against an honest-but-curious server, and (2) their underlying notion of robustness are either weaker than or orthogonal to BR. Furthermore, recent works on the combination of privacy and BR in distributed learning either study a weaker privacy model than DP or provide only elementary analyses~\citep{LearningChain, twoServer_Jaggi,  brea, podc}. We refer the interested reader to Appendix~\ref{app:relatedWork} for an in depth discussion of prior works. In short, we believe the present paper to be the first to provide an in-depth analysis with practical relevance on the integration of DP and BR in distributed learning.
\section{Problem Setting and Background}
\label{sec:prelim}
Let $\mathcal{X}$ be the space of data points. We consider the parameter server architecture with $n$ workers $\{\worker{}{1}, \dots,\worker{}{n}\}$ owning a common dataset $D \in \mathcal{X}^m$ of $m$ points. The workers seek to collaboratively compute a parameter vector $\theta \in \mathbb{R}^d$ that minimizes the empirical loss function $Q$ defined as follows:
\begin{align}
    Q(\weight{}) = \frac{1}{m} \sum_{x \in D} q(\weight{}, \, x)  \quad \forall \weight{} \in \R^d, \label{eqn:exp_loss}
\end{align}
where $q$ is a point-wise \emph{loss function}.
We assume that function $Q$ is differentiable and admits a non-trivial local minimum. In other words, $\nabla Q$ admits a critical point, but it is not null everywhere. We also make the following standard assumptions.

\begin{assumption}[Bounded norm]
\label{asp:bnd_norm}
There exists a finite real $C <\infty$ such that for all $\theta \in \mathbb{R}^d \textnormal{ and } x \in D$,  \[\norm{\nabla q(\weight{}, x)} \leq C. \]
\end{assumption}

\begin{assumption}[Bounded variance]
\label{asp:bnd_var}
There exists a real value $\V < \infty$ such that for all $ \weight{} \in \R^d$,
\[ \frac{1}{m}\sum_{x \in D} \norm{ \compgrad{\weight{}}{x} - \nabla Q\left(\weight{}\right)}^2 \leq \V^2.\]
\end{assumption}

\begin{assumption}[Smoothness]
\label{asp:lip}
There exists a real value $L < \infty$ such that for all $\weight{}, \, \weight{}' \in \R^d$,
\begin{align*}
    \norm{\nabla Q(\weight{}) - \nabla Q(\weight{}')} \leq L \norm{\weight{} - \weight{}'}.
\end{align*}
\end{assumption}

Assumptions~\ref{asp:bnd_var} and~\ref{asp:lip} are classical to most  optimization problems in machine learning~\citep{bottou2018optimization}.
Assumption~\ref{asp:bnd_norm} is merely used to avoid unnecessary technicalities, especially when studying differential privacy. In practice, it can be easily enforced by gradient clipping~\citep{abadi2016deep}. 

In an ideal setting, when all the workers are honest (i.e., non-Byzantine) and data privacy is not an issue, a standard approach to solving the above learning problem is the distributed implementation of the stochastic gradient descent (SGD) method. In this algorithm, the server maintains an estimate of the parameter vector which is updated iteratively by using the \emph{average} of the gradient estimates sent by the workers. However, this algorithm is vulnerable to both privacy and security threats.

%Finally, the server updates $\weight{t}$ along the opposite direction of the average of received gradients.
%In this algorithm, the server maintains an estimate of an optimal learning parameter, which is updated iteratively using the stochastic gradients of the loss function computed independently by the workers by sampling random data points from $D$. 

\paragraph{Threat model.} We consider the server to be {\em honest-but-curious}, and that some of the workers are {\em Byzantine}. An honest-but-curious server follows the prescribed algorithm correctly, but may infer sensitive information about workers' data using their gradients and any other additional information that can be gathered during the learning as demonstrated by~\cite{DLG}. On the other hand, Byzantine workers need not follow the prescribed algorithm correctly and can send arbitrary gradients. For instance, they may either {\em crash} or even send adversarial gradients to prevent convergence of the algorithm~\citep{krum}.

\subsection{Distributed SGD with Differentially Privacy}\label{DP} 
Over the last decade, differential privacy (DP) has become a gold standard in privacy-preserving data analysis~\citep{dwork2014algorithmic}. Intuitively, a randomized algorithm is said to preserve DP if its executions on two adjacent datasets are indistinguishable. More formally, two datasets $D$ and $D'$ are said to be adjacent if they differ by at most one sample. Then, $(\epsilon,\delta)$-DP is defined as follows.
\begin{definition}[$(\epsilon,\delta)$-DP] 
Let 
$\epsilon > 0$, $\delta \in [0,1]$ and $\mathcal{O}$ an arbitrary output space. A randomized algorithm $\mechanismNoArg:\mathcal{X}^m \rightarrow \mathcal{O}$ is $(\epsilon,\delta)$-differentially private if for any two adjacent datasets $D, D'$, and any possible set of outputs $O \subset \mathcal{O}$,
\[
\mathbb{P}[\mechanism{D} \in O] \leq e^{\epsilon} \mathbb{P} \left[ \mechanism{D'} \in O \right] + \delta.
\]
\end{definition}

By far, the most widely used approach to ensure DP in machine learning is to use the differentially private version of SGD, called DP-SGD~\citep{DP_SGD1,bassily2014,abadi2016deep}. The distributed implementation of this scheme against an honest-but-curious server consists, at every step, in making the honest workers add Gaussian noise with variance $s^2$ to their stochastic gradients before sending them to the server. When $s$ is chosen appropriately (e.g., see Theorem~\ref{th:privacyguarantee}), each learning step satisfies $(\epsilon, \delta)$-DP at the worker level. Finally, the privacy guarantee of the overall learning procedure is obtained by using the {\em composition property} of DP~\citep{AdvancedComposition,abadi2016deep,Wang2019RDPMomentAcc}. However, we are mainly interested in studying the impact of \textit{per-step and per-worker} privacy budget $(\epsilon, \, \delta)$ on the resilience of the algorithm to Byzantine workers.

\subsection{Byzantine Resilience of Distributed SGD\label{byzRes}}
In the presence of Byzantine workers, the server can no longer rely on the average of workers' gradients to update the model parameters. Instead, it uses a gradient aggregation rule (GAR) $F: \R^{d \times n} \to \R^d$ that is resilient to incorrect gradients that may be sent by at most $f$ Byzantine workers. A standard notion for defining this resilience is $(\alpha,f)$-Byzantine resilience stated below, which was originally proposed by~\cite{krum}. 

\begin{definition}[$(\alpha, f)$-Byzantine resilience]
%\paragraph{$\boldsymbol{(\alpha, f)}$-Byzantine resilience}
\label{def::BR}
Let $0 \leq \alpha < \pi/2$, and $0 \leq f < n$. Consider $n$ random vectors $g^{(1)}, \ldots, \, g^{(n)}$ among which at least $n-f$ are i.i.d. from a common distribution $\mathcal{G}$. Let $G \sim \mathcal{G}$ be a random vector characterizing this distribution. A GAR $F$ is said to be $(\alpha, f)$-Byzantine resilient for $\mathcal{G}$ if its output $R = F(g^{(1)}, \dots , \, g^{(n)})$ satisfies the following two properties:
\begin{enumerate}[leftmargin=20pt]
    \item $\langle \mathbb{E}\left[R\right], \mathbb{E}\left[G\right]\rangle \geq (1-\sin\alpha) \norm{\mathbb{E}\left[G\right]}^2 > 0$, and
    \item for any $r \in \{2, \, 3, \,4\}$, $\mathbb{E}\left[\norm{R}^r\right]$ is upper bounded by a linear combination of $\mathbb{E}\left[\norm{G}^{r_1}\right], \ldots, \, \mathbb{E}\left[\norm{G}^{r_k}\right]$ where $\sum_{1\leq i \leq k} r_i= r$.
\end{enumerate}
\end{definition}

This condition has been shown critical to ensure convergence of the distributed SGD algorithm in the presence of up to $f$ Byzantine workers~\citep{krum,brute_bulyan}. Thus, it serves as an excellent starting point for studying the Byzantine resilience of distributed DP-SGD. 
Consequently, we consider the algorithm where the server implements a Byzantine robust GAR while the honest workers follow instructions prescribed in DP-SGD.
\section{Combining Differential Privacy and Byzantine Resilience}
Algorithm~\ref{algo}, described below, combines the standard techniques to DP and BR in distributed SGD. Given a GAR $F$ and a noise injection parameter $s$, Algorithm~\ref{algo} computes $T$ steps of distributed DP-SGD with $F$ as an aggregation rule at the server to guarantee BR.

\begin{algorithm}[htb!]
\small
\SetAlgoLined
{\bf Setup:} The server chooses an arbitrary initial parameter vector $\weight{1} \in \R^d$, learning rates $\{\gamma_1, \ldots, \, \gamma_T\}$, and a deterministic GAR $F$.
% to be used to aggregate the workers' gradients. 
Honest workers have a fixed batch size $b$ and noise injection parameter $s$. \\
 \For{$t=1,\dots,T$}{
 The server broadcasts $\weight{t}$ to all workers.\\
 \ForEach{\textnormal{honest worker }$\worker{}{i}$}{ 

1. $\worker{}{i}$ builds a set $B_t$ by sampling $b$ points at random without replacement from $D$ and computes a noisy gradient estimate with noise injection parameter $s$, i.e., it computes 
  \begin{align}
    \gradient{i}{t} = \frac{1}{\vert B_t\vert } \sum\limits_{\datapoint{i}{} \in B_t} \nabla q\left(\weight{t}, \, \datapoint{i}{} \right)  + y^{(i)}_t ;  \quad y^{(i)}_t \sim \mathcal{N}(0, \, s^2 I_d). \label{eqn:grad_DP}
  \end{align}
  
2. $\worker{}{i}$ sends the resulting noisy gradient to the server.}
\ForEach{\textnormal{Byzantine worker }$\worker{}{i}$}{ $\worker{}{i}$ sends to the server a (possibly arbitrary) vector $\gradient{i}{t}$ as its "gradient".
}
The server computes the aggregate $R_t$ of the received gradients using $F$, i.e., it computes \begin{align}
    R_t = F\left(\gradient{1}{t}, \ldots, \, \gradient{n}{t} \right). \label{eqn:R}
\end{align}

The server updates the parameter vector using the learning rate $\gamma_t$ as follows
\begin{align}
    \weight{t+1} = \weight{t} - \gamma_t \, R_t. \label{eqn:SGD}
\end{align}
}
\caption{Distributed DP-SGD with Byzantine resilience}
\label{algo}
\end{algorithm}

Note that when $s=0$, i.e., when no noise is injected, Algorithm~\ref{algo} reduces to a classical Byzantine resilient distributed SGD algorithm as presented in prior works such as~\citet{krum} and \citet{brute_bulyan}. Furthermore, when $f=0$ and $F$ is the average function, it reduces to a distributed implementation of the well-known DP-SGD scheme, e.g., from~\cite{abadi2016deep}.

% To study the feasibility of combining DP and Byzantine resilience, we consider the distributed SGD algorithm, described in Algorithm~\ref{algo} below, incorporating the standard techniques described above. 

\subsection{Differential Privacy Guarantee}\label{sec:privacy}
Intuitively, Algorithm~\ref{algo} should inherit the privacy guarantees of DP-SGD. Indeed, the privacy preserving scheme applied at the worker level is the same and will not by altered by the GAR thanks to the post-processing property of DP~\citep{dwork2014algorithmic}.
Then, owing to previous works, we can easily show that Algorithm~\ref{algo} satisfies $(\epsilon, \delta)$-DP at each step and for each honest worker when $s \approx \nicefrac{2C}{b \epsilon} \sqrt{2\ln\left(\nicefrac{1.25}{\delta}\right)}$. Furthermore, as shown in Theorem~\ref{th:privacyguarantee}, we can obtain a much tighter analysis using advanced analytical tools such as {\em privacy amplification via sub-sampling}~\citep{Balle2018Subsampling}.

\begin{theorem}
\label{th:privacyguarantee} 
Suppose that Assumption~\ref{asp:bnd_norm} holds true. Let $b \in [m]$ and $(\epsilon, \delta) \in (0,1)^2$. Consider Algorithm~\ref{algo} with $s = \frac{2C 
}{b \ln\left(  (e^{\epsilon} -1)\frac{m}{b} +1 \right)} \sqrt{2\ln\left(\frac{1.25 b}{m \delta}\right)}$. Then, each honest worker satisfies $(\epsilon,\delta)$-DP at every step $t \in [T]$ of the procedure.
\end{theorem}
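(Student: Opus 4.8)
The plan is to recognize the per-step, per-worker computation in~\eqref{eqn:grad_DP} as a \emph{subsampled Gaussian mechanism}: at each step the honest worker first draws a batch $B$ of size $b$ uniformly at random without replacement from $D$, and then releases the batch-averaged gradient perturbed by isotropic Gaussian noise of scale $s$. I would prove the claim by composing two classical results---the Gaussian mechanism and privacy amplification by subsampling---and then solving for the noise scale that makes the composite mechanism $(\epsilon,\delta)$-DP. Concretely, I would (i) bound the $L_2$-sensitivity of the base (fixed-batch) mechanism, (ii) invoke the Gaussian mechanism to certify that the base mechanism is $(\epsilon_0,\delta_0)$-DP for suitable $\epsilon_0,\delta_0$, (iii) apply the without-replacement amplification bound of~\citep{Balle2018Subsampling} to convert this into a guarantee for the subsampled release, and (iv) invert the amplification relations so that the target budget $(\epsilon,\delta)$ is met, reading off the value of $s$.

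For step (i), fix the parameter vector $\theta$ and consider the map $B \mapsto \frac{1}{b}\sum_{x \in B}\nabla q(\theta,x)$ on batches of size $b$. If two batches differ in a single point, Assumption~\ref{asp:bnd_norm} gives that the averaged gradients differ by at most $\frac{1}{b}\norm{\nabla q(\theta,x') - \nabla q(\theta,x)} \leq \frac{2C}{b}$, so the base query has $L_2$-sensitivity $\Delta = \nicefrac{2C}{b}$ under the replace-one adjacency. The Gaussian mechanism~\citep{dwork2014algorithmic} then certifies that adding $y \sim \mathcal{N}(0,s^2 I_d)$ yields $(\epsilon_0,\delta_0)$-DP whenever $s \geq \frac{\Delta}{\epsilon_0}\sqrt{2\ln(\nicefrac{1.25}{\delta_0})} = \frac{2C}{b\,\epsilon_0}\sqrt{2\ln(\nicefrac{1.25}{\delta_0})}$.

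For steps (iii)--(iv), I would use that subsampling a batch of size $b$ without replacement from $m$ points amplifies privacy at rate $q = \nicefrac{b}{m}$: a base mechanism that is $(\epsilon_0,\delta_0)$-DP becomes $\bigl(\ln(1 + q(e^{\epsilon_0}-1)),\, q\,\delta_0\bigr)$-DP after subsampling~\citep{Balle2018Subsampling}. Matching the amplified budget to the target by setting $\ln(1+q(e^{\epsilon_0}-1)) = \epsilon$ and $q\,\delta_0 = \delta$ and solving gives $\epsilon_0 = \ln\bigl((e^\epsilon-1)\tfrac{m}{b}+1\bigr)$ and $\delta_0 = \tfrac{m}{b}\,\delta$. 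Substituting these two quantities into the Gaussian-mechanism threshold from step (ii), and noting that $\ln(\nicefrac{1.25}{\delta_0}) = \ln(\nicefrac{1.25 b}{m\delta})$, reproduces exactly the value of $s$ in the statement. Since the released gradient $g^{(i)}_t$ is itself the output of this subsampled Gaussian mechanism, this establishes the per-step, per-worker guarantee.

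I expect the main obstacle to be bookkeeping at the interface of the two theorems rather than any single deep estimate: one must pin down the correct form of the without-replacement amplification bound (its direction and constants differ subtly from the Poisson-subsampling and replace-one variants) and then verify that the induced base parameters fall in the regime where the invoked Gaussian-mechanism bound is valid. In particular, since $\epsilon_0 > \epsilon$ and $\nicefrac{m}{b}$ may be large, $\epsilon_0$ can exceed $1$, so the classical Gaussian-mechanism bound (valid for $\epsilon_0 \in (0,1)$) may need to be replaced by its analytic counterpart; similarly one should record the mild condition $\delta < \nicefrac{1.25\,b}{m}$ ensuring $\delta_0 < 1.25$ so that $s$ is well defined and positive.
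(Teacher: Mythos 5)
Your proposal is correct and follows exactly the paper's own proof: bound the replace-one $L_2$-sensitivity by $\nicefrac{2C}{b}$ via Assumption~\ref{asp:bnd_norm}, certify the fixed-batch noisy release via the Gaussian mechanism at the inflated base budget $\bigl(\epsilon_0,\delta_0\bigr) = \bigl(\ln\bigl((e^{\epsilon}-1)\tfrac{m}{b}+1\bigr),\, \tfrac{m}{b}\delta\bigr)$, and invert the without-replacement amplification bound of \citet{Balle2018Subsampling} to land on $(\epsilon,\delta)$, which reproduces the stated $s$. Your closing caveat is well taken and in fact applies equally to the paper's own argument, which invokes its Gaussian-mechanism lemma (stated only for base parameters in $(0,1)^2$) without verifying that $\epsilon_0$ stays below $1$ or that $\tfrac{m}{b}\delta < 1.25$, so the switch to the analytic Gaussian mechanism that you suggest would be needed to make either proof airtight in that regime.
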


Henceforth, whenever we refer to Algorithm~\ref{algo} with per-step and per-worker privacy budget $(\epsilon,\delta)$, we will consider $s$ as defined in Theorem~\ref{th:privacyguarantee} above. 

\subsection{Inapplicability of Existing Results from the BR Literature}
\label{sec:incompatibility}

As discussed in Section~\ref{sec:prelim}, prior works on BR can demonstrate the convergence of  Algorithm~\ref{algo} \emph{if} the GAR $F$ is $(\alpha,f)$-Byzantine resilient during the entire learning process. However, verifying the validity of $(\alpha, \, f)$-BR is nearly impossible as the condition depends upon the gradients of the Byzantine workers that can be arbitrary~\citep{krum}. The only verifiable condition known in the literature to guarantee $(\alpha,f)$-BR is the {\em variance-to-norm} (VN) condition, which is defined as follows~\citep{brute_bulyan}.

\begin{definition}[VN Condition]
\label{VN_Condition_Original} For a parameter vector $\theta \in \mathbb{R}^d$, let $G(\theta)$ denote the random vector characterizing the gradients sent by the honest workers to the server at $\theta$. A GAR $F$ satisfies the VN condition if for any $\theta$ such that $G(\theta)$ has a non-zero mean, 
\begin{equation*}
    \kappa_F(n,f)^2 \expect{\norm{G(\theta)-\expect{G(\theta)}}^2} < \norm{\expect{G(\theta)}}^2
\end{equation*}
where $\kappa_F(n,f) > 0$ is the {\em multiplicative constant} of GAR $F$ that depends on $n$ and $f$.\footnote{Precise values of $\kappa_F(n,f)$ for most popular GARs can be found in Appendix~\ref{app:garConstant}.} 
\end{definition}

This condition means that for a GAR $F$ to guarantee convergence for the procedure, the distribution of the gradient estimates at parameter $\theta$ must be "well-behaved". For instance, if the norm of the expected stochastic gradients converges to $0$ then so should the variance. Note that in the case of Algorithm~\ref{algo}, from (\ref{eqn:grad_DP}) we obtain that for any $\theta \in \mathbb{R}^d$,
\begin{equation}
 G(\theta):= \frac{1}{b} \sum\limits_{x \in B} \nabla q\left(\weight{}, \, \datapoint{}{}\right) + y   \label{eqn:def_G}
\end{equation}
where $B$ is a set of $b$ data points sampled randomly without replacement from $D$, and $y \sim \mathcal{N}(0,s^2 I_d)$. Thus, the VN condition can no longer be satisfied whenever $s > 0$, i.e., workers follow instructions prescribed in DP-SGD.
% the workers inject Gaussian noise to their gradients as it is the case in Algorithm~\ref{algo}. 
We show this formally in Proposition~\ref{Prop:VNConditionBroken} below.
% that the VN condition cannot be satisfied when considering DP-SGD.

\begin{proposition}\label{Prop:VNConditionBroken} Let $b \in [m]$.
Consider Algorithm~\ref{algo} with $s>0$. If Assumption~\ref{asp:lip} holds true, then there exists no GAR that satisfies the VN condition.
\end{proposition}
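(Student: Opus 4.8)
The plan is to argue by contradiction: assume some GAR $F$ satisfies the VN condition for the gradient distribution $G$ defined in (\ref{eqn:def_G}), and then exhibit a parameter vector $\theta$ at which the condition necessarily fails. The first step is to evaluate the two quantities in Definition~\ref{VN_Condition_Original} for the specific $G(\theta)$ of Algorithm~\ref{algo}. Since the batch $B$ is sampled uniformly without replacement and the injected noise $y \sim \mathcal{N}(0, s^2 I_d)$ is independent of $B$ with zero mean, I would first verify that $\expect{G(\theta)} = \nabla Q(\theta)$, so that the phrase ``$G(\theta)$ has non-zero mean'' is exactly ``$\theta$ is not a critical point of $Q$''. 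The key computation is the variance: exploiting independence of $B$ and $y$, the cross term vanishes and the variance splits as
\[
\expect{\norm{G(\theta) - \expect{G(\theta)}}^2} = \expect{\norm{\frac{1}{b}\sum_{x \in B}\nabla q(\theta, x) - \nabla Q(\theta)}^2} + d\,s^2.
\]
The essential observation is that the additive Gaussian term contributes $\expect{\norm{y}^2} = d\,s^2 > 0$, an irreducible noise floor present at \emph{every} $\theta$ regardless of the sampling variance.

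Substituting this into the VN inequality and discarding the non-negative sampling-variance term gives, at every non-critical $\theta$,
\[
\kappa_F(n,f)^2\, d\, s^2 \;\le\; \kappa_F(n,f)^2 \,\expect{\norm{G(\theta) - \expect{G(\theta)}}^2} \;<\; \norm{\nabla Q(\theta)}^2 .
\]
Writing $c := \kappa_F(n,f)^2\, d\, s^2$, which is strictly positive because $\kappa_F(n,f) > 0$ and $s > 0$, the VN condition would therefore force $\norm{\nabla Q(\theta)}^2 > c$ at \emph{every} $\theta$ with $\nabla Q(\theta) \neq 0$. Equivalently, the range of $\norm{\nabla Q}$ would have to avoid the whole interval $(0, \sqrt{c}\,]$, a gap separating the critical set from all other points.

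The final step is to contradict this gap using continuity, which is where Assumption~\ref{asp:lip} enters: smoothness makes $\nabla Q$ Lipschitz, hence continuous. By the standing hypothesis on $Q$, there is a critical point $\theta^\star$ with $\nabla Q(\theta^\star) = 0$, while $\nabla Q$ is not identically zero, so there is some $\theta_0$ with $\nabla Q(\theta_0) \neq 0$. I would then consider the segment $\theta(\tau) = (1-\tau)\theta^\star + \tau \theta_0$, $\tau \in [0,1]$, along which $\tau \mapsto \norm{\nabla Q(\theta(\tau))}$ is continuous, equals $0$ at $\tau = 0$ and equals $\norm{\nabla Q(\theta_0)} > 0$ at $\tau = 1$. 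By the intermediate value theorem it attains some value $v$ with $0 < v < \min\{\sqrt{c},\, \norm{\nabla Q(\theta_0)}\}$ at some $\tau^\star$; the point $\theta(\tau^\star)$ then has non-zero gradient yet satisfies $0 < \norm{\nabla Q(\theta(\tau^\star))}^2 = v^2 < c$, directly contradicting the lower bound above. I expect the only delicate point to be this continuity argument, namely producing a parameter whose gradient norm lies strictly inside the forbidden interval $(0,\sqrt{c})$ -- precisely what smoothness of $Q$ guarantees via the intermediate value theorem. The variance computation itself is routine once the $d\,s^2$ noise floor is isolated.
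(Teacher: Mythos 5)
Your proposal is correct and follows essentially the same route as the paper: both isolate the irreducible noise floor $\expect{\norm{y}^2} = d s^2$ in the variance decomposition (using independence of the subsampling and the Gaussian noise), and both then use Assumption~\ref{asp:lip} to exhibit a non-critical point whose squared gradient norm is at most $\kappa_F(n,f)^2 d s^2$, contradicting the strict VN inequality. The only difference is in how that point is produced: the paper picks a non-critical $\theta'$ within distance $\kappa_F(n,f)\sqrt{d}\,s/L$ of a critical point and applies the Lipschitz bound directly, whereas you run the intermediate value theorem along a segment from a critical point to a non-critical one to land the gradient norm strictly inside the forbidden interval $\left(0, \kappa_F(n,f)\sqrt{d}\,s\right)$ --- an equally valid variant that, if anything, makes explicit the continuity argument the paper compresses into its ``without loss of generality'' step.
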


Note that when $\epsilon$ and $\delta$ are non-zero, we will have $s>0$ as explained in Section~\ref{sec:privacy}. Accordingly, Proposition~\ref{Prop:VNConditionBroken} means that prior results on the convergence of existing Byzantine resilient GARs, including the works by~\citet{krum} and \citet{brute_bulyan}, are no longer valid when enforcing any \emph{non-zero} level of DP. Although the VN condition is only a sufficient one, due to the lack of necessary conditions in the literature, it is the most widely used tools for proving BR, e.g., see~\citet{krum, brute_bulyan, phocas, MDA, boussetta2021aksel}. Hence, Proposition~\ref{Prop:VNConditionBroken} highlights an inherent limitation of the theory of BR, especially when simultaneously enforcing DP via noise injection. 

\subsection{Adapting the Theory of BR to Account for DP}
\label{sub:generalize}
To circumvent the aforementioned limitation, we propose a relaxation of the theory of $(\alpha,f)$-BR by relaxing the original VN condition to the {\em $\eta$-approximated} VN condition defined below.

\begin{definition}[$\eta$-approximated VN condition]
\label{VN_Condition_Approx} Let $G(\theta)$ denote the random vector characterizing the gradients sent by the honest workers to the server at parameter vector $\theta$. For $\eta \geq 0$, a GAR $F$ satisfies the $\eta$-approximated VN condition if for all $\theta \in \R^d$ such that $\norm{\expect{G(\theta)}} > \eta$,
\begin{equation*}
% \label{eq:approxVN}
    \kappa_F(n,f)^2 \expect{\norm{G(\theta)-\expect{G(\theta)}}^2} < \norm{\expect{G(\theta)}}^2
\end{equation*}
where $\kappa_F(n,f) > 0$ is the {\em multiplicative constant} of GAR $F$ that depends on $n$ and $f$.
% where $\kappa_F(n,f)$ is a positive real-valued constant that depends on $F$, $n$, and $f$.
\end{definition}

Definition~\ref{VN_Condition_Approx} relaxes the initial VN condition by allowing a subset of (possible) parameter vectors $\theta$ to violate the inequality in Definition~\ref{VN_Condition_Original}. In particular, as $\expect{G(\weight{})} = \nabla Q(\weight{})$, when the gradients are sufficiently close to a local minimum, or $\norm{\nabla Q(\theta)} \leq \eta$, the inequality need not be satisfied. While the $\eta$-approximated VN condition is a natural extension of Definition~\ref{VN_Condition_Original}, it enables us to study cases where the distribution of the gradients at $\theta$ is non-trivial, e.g., the variance of $G(\theta)$ need not vanish when $\nabla Q(\weight{})$ approaches $0$. Consequently, we can utilize this new criterion to analyze the convergence of Algorithm~\ref{algo} for different GARs and levels of privacy. Assuming $\eta$-approximated VN condition, we show in Theorem~\ref{thm:approx_resilience} the approximate convergence of Algorithm~\ref{algo}.

\begin{theorem}\label{thm:approx_resilience}
Let $\eta \geq 0$ and $b \in [m]$.
Consider Algorithm~\ref{algo} with $s > 0$, a GAR $F$ satisfying the $\eta$-approximated VN condition, and $\gamma_t = 1 / \sqrt{t}$ for all $t \in [T]$. If Assumptions~\ref{asp:bnd_norm},~\ref{asp:bnd_var}, and~\ref{asp:lip} hold true, then there exists $\alpha \in [0, \, \pi/2)$ and $\mu \in [0, \, \infty)$ such that for any $T \geq 1$,
\begin{align*}
    \min_{t \in [T]}\expect{\norm{\nabla Q(\weight{t})}^2} \leq \max\left\{ \eta^2, ~ \frac{Q(\weight{1}) - Q^*}{(1 - \sin{\alpha})}\left(\frac{1}{\sqrt{T}} \right) + \frac{\mu \sigma^2 L}{2(1 - \sin{\alpha})} \left(\frac{1 + \ln{T}}{\sqrt{T}} \right) \right\} 
\end{align*}
where $Q^*$ is the minimum value of $Q$, i.e., $Q^* = \min\limits_{\theta \in \R^d} Q(\theta)$, and $\sigma = \sqrt{\V^2 + d s^2 + C^2}$.
\end{theorem}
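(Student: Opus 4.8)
The plan is to carry out the standard non-convex SGD descent argument with the GAR output $R_t$ playing the role of the stochastic gradient, and to use the $\eta$-approximated VN condition to control $R_t$ via $(\alpha,f)$-Byzantine resilience. First I would invoke the implication (established for the original VN condition in the Byzantine-SGD literature, e.g.\ \citet{brute_bulyan}) that whenever the VN inequality holds at a parameter $\theta$, the GAR $F$ is $(\alpha,f)$-Byzantine resilient for the distribution $G(\theta)$. Applying this at every $\theta$ with $\norm{\nabla Q(\theta)} > \eta$ gives both properties of Definition~\ref{def::BR} there. The bounded minibatch variance (controlled by $\V^2$ through Assumption~\ref{asp:bnd_var}), the Gaussian-noise variance $d s^2$, and the bounded gradient norm $C^2$ (Assumption~\ref{asp:bnd_norm}) together give the uniform second-moment bound $\expect{\norm{G(\theta)}^2} \leq \sigma^2$. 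Combined with the uniform slack in the restricted VN inequality (the variance-to-norm ratio being bounded away from $1$ since $\norm{\nabla Q(\theta)} > \eta$ while the variance is at most $\V^2 + d s^2$), this yields a single $\alpha \in [0,\pi/2)$ from the first BR property and a single $\mu \in [0,\infty)$ — the linear-combination constant of the second BR property, giving $\expect{\norm{R_t}^2 \mid \mathcal{F}_t} \leq \mu \sigma^2$ — valid simultaneously at all such $\theta$.

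Next I would apply the smoothness descent lemma (Assumption~\ref{asp:lip}) to the update $\weight{t+1} = \weight{t} - \gamma_t R_t$, take the conditional expectation given the natural filtration $\mathcal{F}_t$, and use the unbiasedness $\expect{G(\weight{t}) \mid \mathcal{F}_t} = \nabla Q(\weight{t})$ (uniform sampling and zero-mean noise). At any step where $\norm{\nabla Q(\weight{t})} > \eta$, the first BR property lower-bounds $\iprod{\nabla Q(\weight{t})}{\expect{R_t \mid \mathcal{F}_t}}$ by $(1-\sin\alpha)\norm{\nabla Q(\weight{t})}^2$ and the second upper-bounds $\expect{\norm{R_t}^2 \mid \mathcal{F}_t}$ by $\mu\sigma^2$, producing the one-step inequality
\[
\expect{Q(\weight{t+1}) \mid \mathcal{F}_t} \leq Q(\weight{t}) - \gamma_t (1-\sin\alpha)\norm{\nabla Q(\weight{t})}^2 + \tfrac{L \gamma_t^2}{2}\mu\sigma^2 .
\]

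I would then rearrange to isolate $\gamma_t(1-\sin\alpha)\norm{\nabla Q(\weight{t})}^2$, take total expectations, sum over $t \in [T]$ so the $Q$-terms telescope, and drop the last iterate using $Q(\weight{T+1}) \geq Q^*$. Dividing by $(1-\sin\alpha)\sum_t \gamma_t$ and bounding the minimum by the $\gamma_t$-weighted average of the expectations gives
\[
\min_{t \in [T]}\expect{\norm{\nabla Q(\weight{t})}^2} \leq \frac{Q(\weight{1}) - Q^* + \tfrac{L\mu\sigma^2}{2}\sum_{t} \gamma_t^2}{(1-\sin\alpha)\sum_{t}\gamma_t}.
\]
Substituting $\gamma_t = 1/\sqrt{t}$ and using the elementary bounds $\sum_{t=1}^T 1/t \leq 1 + \ln T$ and $\sum_{t=1}^T 1/\sqrt{t} \geq \sqrt{T}$ reproduces exactly the right-hand branch of the claimed maximum.

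The main obstacle is that the one-step inequality is valid only at steps where $\norm{\nabla Q(\weight{t})} > \eta$, while $\weight{t}$ is random, so I cannot assume the descent holds at every step merely because $\min_t \expect{\norm{\nabla Q(\weight{t})}^2} > \eta^2$ (an expectation being large does not force the realization to exceed $\eta$). I would handle this with a stopping-time / indicator decomposition: let $\tau$ be the first step at which $\norm{\nabla Q(\weight{t})} \leq \eta$. On realizations with $\tau \leq T$ the iterate $\weight{\tau}$ already has gradient norm at most $\eta$, feeding the $\eta^2$ branch of the maximum; for $t < \tau$ the descent inequality holds, and since $\{\tau > t\}$ is $\mathcal{F}_t$-measurable, multiplying the one-step inequality by $\1\{\tau > t\}$ preserves the telescoping. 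The delicate point is to assemble the two regimes so that the conclusion reads $\min_t\expect{\cdot} \leq \max\{\eta^2,\,\cdot\}$ rather than a bound on $\expect{\min_t \cdot}$; reducing at the outset to the case $\min_t\expect{\norm{\nabla Q(\weight{t})}^2} > \eta^2$ and running the telescoped bound on the stopped process is the cleanest route. The remaining steps are routine computations.
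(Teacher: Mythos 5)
Your proposal is correct at (at least) the paper's own level of rigor, and its load-bearing steps coincide with the paper's proof: the implication ``VN inequality at $\theta$ $\Rightarrow$ $(\alpha,f)$-Byzantine resilience for $G(\theta)$'' imported from prior work, the second-moment bound $\condexpect{t}{\norm{G(\weight{t})}^2} \leq \V^2 + d s^2 + C^2 = \sigma^2$, the smoothness descent recursion on $\weight{t+1} = \weight{t} - \gamma_t R_t$ with conditional expectations, the telescoping with $Q(\weight{T+1}) \geq Q^*$, and the bounds $\sum_{t=1}^T 1/t \leq 1 + \ln T$, $\sum_{t=1}^T 1/\sqrt{t} \geq \sqrt{T}$ are exactly the paper's computation. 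The genuine difference is the treatment of the small-gradient regime, and there you are more careful than the paper, not less. The paper partitions $[T]$ into the \emph{random} set $S_T = \left\{ t : \norm{\condexpect{t}{G(\weight{t})}} \leq \eta \right\}$ and its complement and argues by two ``cases'': if $S_T \neq \emptyset$ it observes that $\min_{t \in [T]} \norm{\nabla Q(\weight{t})} \leq \eta$ \emph{realization-wise} and declares the theorem ``trivially true''; if $S_T = \emptyset$ it runs the descent argument at every step. This is precisely the conflation you refused to make: the dichotomy is an event, not a deterministic alternative, and realization-level smallness does not bound $\min_t \expect{\norm{\nabla Q(\weight{t})}^2}$ by $\eta^2$. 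Your stopping-time decomposition is an attempt to repair exactly the step the paper resolves by fiat.

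Be aware, however, that your repair as sketched does not fully close the gap either: the stopped telescoping controls $\sum_t \gamma_t \expect{\norm{\nabla Q(\weight{t})}^2 \, \1\{\tau > t\}}$, and these truncated expectations can be small merely because $\mathbb{P}[\tau > t]$ is small, so reducing to the case $\min_t \expect{\norm{\nabla Q(\weight{t})}^2} > \eta^2$ does not let you discard the indicators and recover the claimed inequality. This residual difficulty is intrinsic to the min-of-expectations formulation and is present, unacknowledged, in the paper's own proof; what both arguments rigorously establish is the descent bound on the event that $\norm{\nabla Q(\weight{t})} > \eta$ for all $t \in [T]$, plus the realization-level fallback otherwise. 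One last minor point: your parenthetical justification for a \emph{single} $\alpha < \pi/2$ and $\mu < \infty$ valid at all relevant $\theta$ (the variance-to-norm ratio being ``bounded away from $1$'') does not follow from the $\eta$-approximated VN condition alone, which gives only a pointwise strict inequality; the paper likewise asserts this uniformity without proof (see its Remark following the theorem), so here too you match, rather than fall below, its standard.
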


According to Theorem~\ref{thm:approx_resilience}, Algorithm~\ref{algo} can compute a parameter $\theta$ for which $\norm{\nabla Q(\theta)} \leq \eta$ in expectation with a rate of $O(\ln{T}/\sqrt{T})$. In other words, when the loss function $Q$ is {\em regularized} (see, e.g.,~\citet{bottou2018optimization}), it finds an approximate local minimum with an error proportional to $\eta^2$. Note that, when $\eta = 0$ (i.e., when DP is not enforced), the above result encapsulates the existing convergence results from the BR literature, e.g.,~\citet{krum, brute_bulyan}.

\begin{remark}
For generality, we do not provide the exact values for parameters $\alpha$ and $\mu$ in Theorem~\ref{thm:approx_resilience}. These two constants depend on the learning scheme that is applied, in particular the resilience properties of the GAR used. However, since these parameters are constant throughout the learning procedure, keeping them to be generic does not affect our conclusions on the asymptotic error.
\end{remark}

\subsection{Studying the Interplay between DP and BR}\label{sec:quantify}
The value of $\eta$ is intrinsically linked to the amount of noise that workers inject to the procedure. In a way, it represents the impact of per-worker DP on the resilience of Algorithm~\ref{algo} to Byzantine workers. To quantify this impact, we present in Proposition~\ref{prop:relax} sufficient and necessary conditions for a GAR $F$ to satisfy the $\eta$-approximated VN condition in the context of Algorithm~\ref{algo}.

\begin{proposition}\label{prop:relax}
Let $b \in [m]$, $(\epsilon, \delta) \in (0, \,1)^2$.
Consider Algorithm~\ref{algo} with privacy budget $(\epsilon,\delta)$ and GAR $F$ with multiplicative constant $\kappa_F(n,f) >0$. Then, the following assertions hold true. 
\begin{enumerate}[leftmargin=1cm]
  \item  Under Assumptions~\ref{asp:bnd_norm} and~\ref{asp:lip}, the $\eta$-approximated VN condition can hold true only if
\[ 
 \eta^2 \geq 4 \kappa_F(n,f)^2 C^2 d \ln\left(\frac{1.25 b}{m \delta}\right) \frac{1}{b m (e^{\epsilon} -1)}. 
\]
  \item Additionally under Assumption~\ref{asp:bnd_var}, if 
%\[ 
% \eta^2 \geq \kappa_F(n,f)^2 \frac{8 C^2 d \ln\left(\frac{1.25 b}{m \delta}\right)}{b^2 \ln\left((e^{\epsilon} -1)\frac{m}{b} +1 \right)^2} + \frac{v^2}{b} 
%\]
\[ 
 \eta^2 \geq \kappa_F(n,f)^2 \left( 8 C^2 d \ln\left(\frac{1.25 b}{m \delta}\right) \left(\frac{1}{m(e^{\epsilon} -1)} + \frac{1}{b} \right)^2 + \V^2 \right) 
\]
then $F$ satisfies the $\eta$-approximated VN condition. 
\end{enumerate}
\end{proposition}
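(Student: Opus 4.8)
The plan is to reduce both assertions to moment computations for the honest-worker gradient $G(\theta)$ defined in~(\ref{eqn:def_G}). First I would compute its mean and variance over both sources of randomness. Because sampling $b$ points without replacement is unbiased and the injected noise $y\sim\mathcal N(0,s^2 I_d)$ is independent with zero mean, one gets $\expect{G(\theta)}=\nabla Q(\theta)$ together with a clean decomposition
\[
\expect{\norm{G(\theta)-\nabla Q(\theta)}^2}=\expect{\norm{\frac1b\sum_{x\in B}\nabla q(\theta,x)-\nabla Q(\theta)}^2}+d\,s^2 ,
\]
the cross term vanishing by independence. The finite-population variance formula bounds the first (sampling) term by $\frac{m-b}{m-1}\frac{1}{b}\cdot\frac1m\sum_{x\in D}\norm{\nabla q(\theta,x)-\nabla Q(\theta)}^2\le \V^2/b\le \V^2$ via Assumption~\ref{asp:bnd_var}, while the noise contributes exactly $d\,s^2$ regardless of $\theta$. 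Substituting $s$ from Theorem~\ref{th:privacyguarantee} gives $d\,s^2=\frac{8C^2 d\ln(1.25b/m\delta)}{b^2\ln^2((e^\epsilon-1)m/b+1)}$; I would set $u:=(e^\epsilon-1)m/b$ to keep the logarithms manageable.

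For the \textbf{sufficient} direction (assertion 2), I would use the observation that the $\eta$-approximated VN condition holds whenever $\kappa_F(n,f)^2\,\expect{\norm{G(\theta)-\nabla Q(\theta)}^2}\le\eta^2$: any $\theta$ with $\norm{\nabla Q(\theta)}>\eta$ then satisfies $\kappa_F(n,f)^2\cdot(\text{variance})\le\eta^2<\norm{\nabla Q(\theta)}^2$, which is precisely Definition~\ref{VN_Condition_Approx}. Thus it suffices to upper bound the variance by $\V^2+8C^2 d\ln(1.25b/m\delta)\big(\tfrac{1}{m(e^\epsilon-1)}+\tfrac1b\big)^2$. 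The $\V^2$ part is already in hand, so the remaining step is to show $d\,s^2\le 8C^2 d\ln(1.25b/m\delta)\big(\tfrac{1}{m(e^\epsilon-1)}+\tfrac1b\big)^2$, which after cancelling common factors is exactly the elementary inequality $\tfrac{u}{1+u}\le\ln(1+u)$. Multiplying through by $\kappa_F(n,f)^2$ and invoking the hypothesis on $\eta^2$ closes this case.

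For the \textbf{necessary} direction (assertion 1), the crucial fact is that the noise forces $\expect{\norm{G(\theta)-\nabla Q(\theta)}^2}\ge d\,s^2$ at every $\theta$. Hence if the $\eta$-approximated VN condition holds, every $\theta$ with $\norm{\nabla Q(\theta)}>\eta$ obeys $\kappa_F(n,f)^2 d\,s^2\le\kappa_F(n,f)^2\cdot(\text{variance})<\norm{\nabla Q(\theta)}^2$. To turn this into a lower bound on $\eta$ I would use continuity of $\nabla Q$: since $\nabla Q$ vanishes at some critical point but is not identically zero, the intermediate value theorem along a segment joining the critical point to a point of nonzero gradient shows that $\norm{\nabla Q}$ realizes every value in an interval $[0,c_0]$ with $c_0>0$. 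In the regime $\eta<c_0$ this lets me pick $\theta$ with $\norm{\nabla Q(\theta)}\downarrow\eta$, forcing $\kappa_F(n,f)^2 d\,s^2\le\eta^2$. Finally, writing $\kappa_F(n,f)^2 d\,s^2=\big(4\kappa_F(n,f)^2 C^2 d\ln(1.25b/m\delta)\tfrac{1}{bm(e^\epsilon-1)}\big)\cdot\tfrac{2u}{\ln^2(1+u)}$ and using $\ln^2(1+u)\le 2u$ yields $\eta^2\ge 4\kappa_F(n,f)^2 C^2 d\ln(1.25b/m\delta)\tfrac{1}{bm(e^\epsilon-1)}$, as claimed.

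I expect the main obstacles to be the two elementary-but-not-obvious logarithmic estimates that translate the privacy-calibrated noise scale $s$ into the stated closed forms, namely $\tfrac{u}{1+u}\le\ln(1+u)$ for sufficiency and $\ln^2(1+u)\le 2u$ for necessity; the latter I would prove by checking that $2u-\ln^2(1+u)$ is increasing, using $\tfrac{\ln(1+u)}{1+u}\le 1/e<1$. The second delicate point is the continuity argument in the necessary direction: the violating $\theta$ only exists when $\eta$ lies below $\sup_\theta\norm{\nabla Q(\theta)}$, so I would note separately that if $\eta$ exceeds this supremum the $\eta$-approximated VN condition holds vacuously and the regime falls outside the intended (non-converged) range of interest.
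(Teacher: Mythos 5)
Your proposal is correct and follows essentially the same route as the paper's proof: the same bias/variance decomposition of $G(\theta)$ with the noise floor $d s^2$, the same substitution of $s$ from Theorem~\ref{th:privacyguarantee}, the same continuity/intermediate-value argument forcing $\kappa_F(n,f)^2 d s^2 \le \eta^2$ in the necessary direction, and the same elementary logarithmic estimates (your $u/(1+u)\le\ln(1+u)$ is the paper's $1-\nicefrac{1}{x}\le\ln x$ in disguise). Two minor remarks: your inequality $\ln^2(1+u)\le 2u$ is exactly what is needed to obtain the constant $4$ in assertion~1 (the paper invokes the weaker $\ln(1+u)\le 2\sqrt{u}$, which only yields the constant $2$), and the caveat you flag---that the intermediate-value argument requires $\sup_\theta\norm{\nabla Q(\theta)}$ to reach the relevant threshold, the condition being vacuous otherwise---is present but left implicit in the paper's own proof, where the range of $\norm{\nabla Q(\cdot)}^2$ is simply asserted to contain the needed interval.
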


The above result, in conjunction with Theorem~\ref{thm:approx_resilience}, presents a convergence guarantee that can be obtained by distributed DP-SGD under Byzantine faults. 
In particular, we have the following corollary of Theorem~\ref{thm:approx_resilience} and Proposition~\ref{prop:relax}.
\begin{corollary}\label{corr:approx_resilience_tradeoff}
Let $b \in [m]$, $(\epsilon,\delta) \in (0, \,1)^2$.
Consider Algorithm~\ref{algo} with privacy budget $(\epsilon,\delta)$ and GAR $F$, and $\gamma_t = 1 / \sqrt{t}$ for all $t \in [T]$. If Assumptions~\ref{asp:bnd_norm},~\ref{asp:bnd_var}, and~\ref{asp:lip} are satisfied, then for any $T \geq 1$,
\footnotesize
\begin{align*}
    \min_{t \in [T]}\expect{\norm{\nabla Q(\weight{t})}^2} \leq \max\left\{ \kappa_F(n,f)^2 \left( 8 C^2 d \ln\left(\frac{1.25 b}{m \delta}\right) \left(\frac{1}{m(e^{\epsilon} -1)} + \frac{1}{b} \right)^2 + \V^2 \right), ~ O\left(\frac{\ln{T}}{\sqrt{T}} \right) \right\}. 
\end{align*}
\normalsize
\end{corollary}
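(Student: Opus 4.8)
The plan is to obtain Corollary~\ref{corr:approx_resilience_tradeoff} as a direct composition of the sufficient condition in Proposition~\ref{prop:relax} (part 2) with the approximate convergence bound of Theorem~\ref{thm:approx_resilience}, so that essentially no new analytic work is needed beyond choosing the right value of $\eta$ and some bookkeeping on the $O(\cdot)$ term. The key idea is to pick $\eta$ equal to the smallest value for which Proposition~\ref{prop:relax} \emph{guarantees} the $\eta$-approximated VN condition, and then feed that same $\eta$ into Theorem~\ref{thm:approx_resilience}.

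Concretely, first I would set
\[
\eta_0^2 := \kappa_F(n,f)^2 \left( 8 C^2 d \ln\left(\frac{1.25 b}{m \delta}\right) \left(\frac{1}{m(e^{\epsilon} -1)} + \frac{1}{b} \right)^2 + \V^2 \right),
\]
which is exactly the right-hand side of the sufficient bound in the second assertion of Proposition~\ref{prop:relax}. Since the privacy budget satisfies $(\epsilon,\delta) \in (0,1)^2$, the noise parameter $s$ prescribed in Theorem~\ref{th:privacyguarantee} is strictly positive, so Algorithm~\ref{algo} is indeed run with $s > 0$, as both the proposition and the theorem require. Applying Proposition~\ref{prop:relax} (part 2) with $\eta = \eta_0 \geq 0$ then shows, under Assumptions~\ref{asp:bnd_norm},~\ref{asp:bnd_var}, and~\ref{asp:lip}, that the GAR $F$ satisfies the $\eta_0$-approximated VN condition.

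Next I would invoke Theorem~\ref{thm:approx_resilience} with this particular $\eta = \eta_0$ and the schedule $\gamma_t = 1/\sqrt{t}$. The theorem yields constants $\alpha \in [0,\pi/2)$ and $\mu \in [0,\infty)$ for which
\[
\min_{t \in [T]}\expect{\norm{\nabla Q(\weight{t})}^2} \leq \max\left\{ \eta_0^2, ~ \frac{Q(\weight{1}) - Q^*}{1 - \sin\alpha}\cdot\frac{1}{\sqrt{T}} + \frac{\mu \sigma^2 L}{2(1 - \sin\alpha)}\cdot\frac{1 + \ln T}{\sqrt{T}} \right\},
\]
with $\sigma = \sqrt{\V^2 + d s^2 + C^2}$. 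It then remains only to absorb the second argument of the maximum into $O(\ln T / \sqrt{T})$ and to substitute $\eta_0^2$ back in, which reproduces the claimed bound verbatim.

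The only genuine subtlety — rather than a real obstacle — will be confirming that the constant hidden in the $O(\cdot)$ is legitimately $T$-free. This hinges on the facts that $Q(\weight{1}) - Q^*$, $\alpha$, $\mu$, and $L$ are all independent of the horizon $T$, and that $\sigma$ is finite and $T$-independent: indeed $\sigma$ is controlled by the bounded-variance and bounded-norm assumptions, while $s$ is fixed by the privacy budget via Theorem~\ref{th:privacyguarantee} as a function of $C$, $b$, $m$, $\epsilon$, $\delta$ only, and so does not vary with $T$. Hence the second term is a $T$-independent constant times $(1 + \ln T)/\sqrt{T}$, i.e.\ $O(\ln T/\sqrt{T})$, and the corollary follows.
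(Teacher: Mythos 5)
Your proposal is correct and follows essentially the same route as the paper's own proof: instantiate $\eta$ at the threshold given by the sufficient condition in Proposition~\ref{prop:relax} (part 2), feed that $\eta$ into Theorem~\ref{thm:approx_resilience}, and absorb the $\frac{Q(\weight{1}) - Q^*}{1-\sin\alpha}\cdot\frac{1}{\sqrt{T}} + \frac{\mu\sigma^2 L}{2(1-\sin\alpha)}\cdot\frac{1+\ln T}{\sqrt{T}}$ term into $O(\ln T/\sqrt{T})$. Your explicit checks that $s>0$ (so the theorem applies) and that the constants hidden in the $O(\cdot)$ are $T$-free are points the paper leaves implicit, and you also correctly square $\eta$ where the paper's proof has a small typo ($\eta \geq$ instead of $\eta^2 \geq$), but these are refinements of the same argument, not a different one.
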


Corollary~\ref{corr:approx_resilience_tradeoff} quantifies the impact of different parameters on the convergence of the algorithm. For instance, we observe that larger values of $\epsilon$ and $\delta$, i.e., weaker DP guarantees, imply smaller worst-case convergence error and therefore, better guarantee of learning. But importantly, it also shows how the convergence guarantee of the algorithm depends upon other hyperparameters, namely the batch size $b$, the number of parameters $d$, and the multiplicative constant $\kappa_F(n,f)$ of the GAR. Let us for example take the case of the batch size below.

{\bf Impact of batch size.} We consider the specific GAR of Minimum-Diameter Averaging ($\brute{}$) for which $\kappa_{MDA}(n,f) = \nicefrac{\sqrt{8} f}{n-f}$~\citep{brute_bulyan}. Then, from Corollary~\ref{corr:approx_resilience_tradeoff}, we obtain that
\footnotesize
\begin{align*}
    \min_{t \in [T]}\expect{\norm{\nabla Q(\weight{t})}^2} \leq \max\left\{ \frac{8 f^2}{(n-f)^2} \left( 8 C^2 d \ln\left(\frac{1.25 b}{m \delta}\right) \left(\frac{1}{m(e^{\epsilon} -1)} + \frac{1}{b} \right)^2 + \V^2 \right), ~ O\left(\frac{\ln{T}}{\sqrt{T}} \right) \right\}. 
    % \label{eqn:conv_1}
\end{align*}
\normalsize
From above, we note that when parameters $\epsilon$ and $\delta$ are in the interval $(0, 1)$ and $f > 0$, i.e., both DP and BR are enforced, then increasing the batch size $b$ indeed reduces the asymptotic convergence error of the algorithm. However, this is not the case when we consider DP and BR separately. When all workers are honest, $f = 0$, which implies $\eta = 0$. The algorithm then asymptotically converges to a local minimum regardless of the batch size used. On the other hand, when the workers do not obfuscate their gradients ($s = 0$), the $\eta$-approximated VN condition holds true for $\eta \geq \left(\nicefrac{\sqrt{8} f}{n-f} \right) \V$. Then, the asymptotic convergence error of the algorithm is again independent of the batch size. 
To conclude, the batch size plays a crucial role in improving the learning accuracy when enforcing DP and BR simultaneously, but it should have little influence when considering them individually.

\begin{remark}
Although Corollary~\ref{corr:approx_resilience_tradeoff} provides some useful insights on improving the accuracy of the learning algorithm combining DP and BR, it need not be {\em tight} as it only provides an upper bound relying on a sufficient condition; the $\eta$-approximated VN condition. It turns out that providing a non-trivial lower bound for distributed SGD in the presence of Byzantine faults remains an open problem, even without DP. In spite of this, we show the practical relevance of the insights obtained from Corollary~\ref{corr:approx_resilience_tradeoff} through an exhaustive set of experiments in the subsequent section.
\end{remark}
\section{Numerical Experiments}\label{experiments}
The goal of our experiments is to investigate whether our theoretical insights are actually applicable in practice and whether hyperparameter optimization (HPO) can improve the integration of DP and BR. Accordingly, we assess the impact of varying different hyperparameters on the training losses and top-1 cross-accuracies of a neural network under $\left( \epsilon, \delta \right)$-DP and attacks from Byzantine workers over a maximum of $300$ learning steps.

\subsection{Experimental Setup}\label{expSetup}
\paragraph{Datasets.} We use MNIST~\citep{mnist} and Fashion-MNIST \citep{fashion-mnist}. The datasets are pre-processed before training. MNIST receives an input image normalization with mean $0.1307$ and standard deviation $0.3081$. Fashion-MNIST is expanded with horizontally flipped images. Due to space limitations, we only showcase here results on the Fashion-MNIST dataset.\\\\
\paragraph{Architecture and fixed hyperparmaters.} We consider a feed-forward neural network composed of two fully-connected linear layers of respectively 784 and 100 inputs (for a total of $d = 79\,510$ parameters) and terminated by a \emph{softmax} layer of 10 dimensions. ReLU is used between the two linear layers. We use the Cross Entropy loss, a total number of workers $n=15$, Polyak's momentum of $0.99$ at the workers, a constant learning rate of $0.5$, and a clipping parameter $C=2$. We also add an $\ell_2$-regularization factor of $10^{-4}$. Note that some of these constants are reused from the literature on BR, especially from~\citet{little, empire, distributed-momentum}.\\\\
\paragraph{Varying hyperparameters for HPO.}
For both datasets, we vary the batch size $b$ within $\{$25, 50, 150, 300, 500, 750, 1000, 1250, 1500$\}$, the per-step and per-worker privacy parameter $\epsilon$ in $\left\lbrace 0.2, 0.1, 0.05 \right\rbrace$ ($\delta$ is fixed to $10^{-5}$), the number of Byzantine workers $f$ in $\left\lbrace 3, 6 \right\rbrace$ as well as the attack they implement (\textit{little} from~\citet{little} and \textit{empire} from~\citet{empire}). We also vary the Byzantine resilient GAR $F$ in $\left\lbrace \brute{}, \krum{}, \median{}, \bulyan{} \right\rbrace$. Note that due to its large computational cost, we only use the \bulyan{} aggregation rule when $f = 3$.

Each of the 432 possible combinations of these hyperparameters is run 5 times using seeds from 1 to 5 (for reproducibility purposes), totalling in 2160 runs. Each run satisfies $\left( \epsilon, \delta \right)$-DP at every step under attacks from Byzantine workers. To assess the impact of the privacy noise alone, we also run the experiments specified above with the \emph{averaging} GAR and without Byzantine workers (denoted by ``No attack''). These experiments account for another 27 combinations, totalling in 135 additional runs. Overall, we performed a comprehensive set of 2295 runs for which we provide a brief summary below. More details on the experimental setup and results can be found in Appendices~\ref{app:expSetup} and \ref{app:expResults}. 

\subsection{Experimental Results}\label{sec:experiment-results}
\begin{figure}[ht]
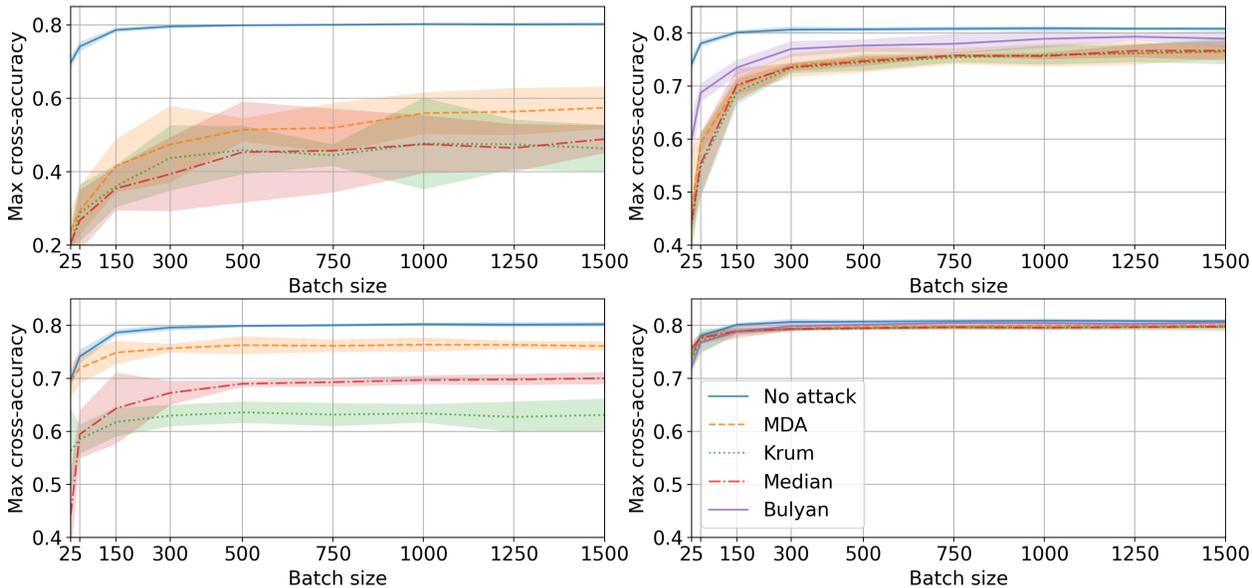

    \centering
    \includewideplot{0.5\linewidth}{fashionmnist-little-f_6-m_0.99-at_worker-b_variable-e_0.05}%
    \includewideplot{0.5\linewidth}{fashionmnist-little-f_3-m_0.99-at_worker-b_variable-e_0.2}\\%
    \includewideplot{0.5\linewidth}{fashionmnist-empire-f_6-m_0.99-at_worker-b_variable-e_0.05}%
    \includewideplot{0.5\linewidth}{fashionmnist-empire-f_3-m_0.99-at_worker-b_variable-e_0.2}%
    \caption{Maximum top-1 cross-accuracy reached on Fashion-MNIST when \emph{only} varying the batch size $b$ for different threat scenarios and different GARs. The first and second rows show the \textit{little} and \textit{empire} attacks respectively.
    The first and second columns display $f = 6, \epsilon = 0.05$ and $f = 3, \epsilon = 0.2$ respectively. All reported metrics include a standard deviation obtained with the 5 consecutive runs. As a reference, note that the maximal top-1 cross-accuracy achieved by the tested model in the vanilla setting (i.e., with neither DP nor Byzantine faults) is around 84--85\% for Fashion-MNIST.}
    \label{fig:variable-batch-spectrum}
\end{figure}

In Figure~\ref{fig:variable-batch-spectrum}, we give a snapshot of our results by showcasing 4 characteristic outcomes encountered. Below, we present further characterization of them. Besides validating our theoretical insights on the impact of the batch size and the GAR selection on the convergence of Algorithm~\ref{algo}, these plots also showcase the threat scenarios in which hyperparameter optimization (HPO) has the most impact. Note that the \emph{little} attack was more damaging than \emph{empire} in our experiments; hence in our discussion below, we consider \emph{little} to be a stronger threat than \emph{empire}, ceteris paribus.

\begin{enumerate}[leftmargin=0.5cm]
    \item \emph{Strongest threat scenario (top left).} We consider \textit{little} with $f = 6$ and $\epsilon = 0.05$, i.e., the strongest level of attack and privacy we implemented. In this stringent scenario, the algorithm fails to deliver good learning accuracy under Byzantine attacks. Although increasing the batch size helps improve the convergence, the accuracy remains quite poor (well below $0.6$, even when $b=1500$).
    \item \emph{Relaxed threat scenario (bottom left).} Here, we keep $f = 6$ and $\epsilon = 0.05$, but we trade the attack for a weaker one ({\em empire}). This scenario validates our intuition on the advantage of increasing the batch size, but it mostly highlights the impact of GAR selection. Different GARs differ significantly in their maximum cross accuracies, while $\brute{}$ performs the best. 
    \item \emph{Mild threat scenario (top right):}  We now consider $\epsilon=0.2$ and $f=3$, i.e., a weaker privacy guarantee and a fewer number of Byzantine workers. However, we revert back to \textit{little} attack. We see that, for all GARs, increasing the batch size significantly improves the maximum cross-accuracy. The choice of GAR also impacts the performance, with \textit{Bulyan} being the best.
    % HPO has a significant impact on the learning performance. For all 
    \item \emph{Weakest threat scenario (bottom right):} We consider \textit{empire} with $f=3$ and $\epsilon=0.2$. The threat is so weak that all GARs perform almost the same. Although HPO still helps to obtain a better accuracy, it is not critical in this setting.
\end{enumerate}

\paragraph{Main Takeaway.} Our empirical results show that training a feed-forward neural network under both DP and BR is \emph{possible} but \emph{expensive} in some settings. Indeed, in the non-trivial threat scenarios, to achieve the same maximum cross-accuracy as DP-SGD with $b \approx 50$, we need a per-worker batch size $\geq 1000$, i.e.,~$20$ times larger than the Byzantine-free setting. Moreover, depending upon the setting, the selection of the GAR might be more influential than the batch size. Finally, note that in the Byzantine-free setting, the DP-SGD algorithm obtains reasonable cross-accuracies (close to $0.8$) for most batch sizes considered. This validates our theoretical findings (discussed in Section~\ref{sec:quantify}) that the batch size has a more significant impact when combining DP and BR compared to when enforcing DP alone. Similar observations on the negligible impact of the batch size in the privacy-free setting (but under Byzantine attacks) can be found in Appendix~\ref{app:expResults}.
\section{Conclusion \& Open problems}
\label{sec:discussion}
In this paper, we have studied the integration of standard approaches to DP and BR, namely the distributed implementation of the popular DP-SGD protocol in conjunction with $(\alpha, \, f)$-BR GARs. Upon highlighting the limitations of the existing theory of BR when applied to this algorithm, we have proposed a generalization of this theory. By doing so, we have (1) quantified the impact of DP on BR, and (2) proposed an HPO scheme to effectively combine DP and BR. Our results have shown that DP and BR can be combined but at the expense of computational cost in some settings. 

Our generalization of the theory of $(\alpha, \, f)$-BR is also of independent interest. Specifically, we have proposed a relaxation of the VN condition as $\eta$-approximated VN condition. Although the VN condition is quite stringent and only sufficient, it is consistently relied upon to design and study different Byzantine resilient GARs~\citep{krum,brute_bulyan,MDA,distributed-momentum,boussetta2021aksel}. Hence, our convergence result, obtained using the relaxed $\eta$-approximated VN condition, supersedes many existing results in the literature of BR.

Interestingly, we have observed through our experiments that even when the relaxed $\eta$-approximated VN condition is violated, the algorithm obtains reasonable learning accuracy. This observation opens two interesting problems expounded below.
\begin{enumerate}[leftmargin=0.5cm]
    \item \emph{A theoretical problem:} The VN condition (either approximated or not) is not tight enough to fully characterize BR. That is, in some cases, a GAR may be $(\alpha,f)$-BR without satisfying the VN condition. Furthermore, the theory of BR focuses on "worst-case" attacks that, for now, might not be achievable in practice. Hence, the question on the tightness of the VN condition for any specific attack, even without DP, remains open.
    \item \emph{An empirical problem:} The practice of BR focuses on state-of-the-art realizable attacks. These attacks are arguably sub-optimal explaining why we can obtain reasonable learning accuracy despite the violation of the VN condition. This also calls for designing better (or stronger) attacks.
\end{enumerate}

Finally, while we have focused on adapting the theory of BR to make it more compatible with the standard DP-SGD algorithm, an alternate future direction could be to investigate other DP mechanisms that may comply better with classical approaches to BR, while preserving DP guarantees.

\bibliography{ref}
\bibliographystyle{ACM-Reference-Format}

\newpage
\appendix
\section{Related Work}\label{app:relatedWork}
\paragraph{Privacy.} In the past, significant attention has been given to protecting data privacy for both centralized~\citep{DP_SGD1, DP_SGD2, abadi2016deep} and distributed SGD~\citep{DP_SGD_Fed1, DP_SGD_Fed2}. Although several techniques for data protection exist such as the encryption~\citep{encryptGrads} or obfuscation~\citep{gradObfuscation} of gradients, the most standard approach consists in adding DP noise to the gradients computed by the workers~\citep{abadi2016deep, DP_SGD1, DP_SGD_Fed1, DP_SGD_Fed2}, which is what we consider. However, these works only consider a fault-free setting where all workers are assumed to be honest.\\\\
\paragraph{Byzantine resilience.} In a separate line of research, several other works have designed Byzantine resilient schemes for distributed SGD in the parameter-server architecture~\citep{krum, brute_bulyan, median, meamed, phocas, MDA, boussetta2021aksel}. Nevertheless, in these papers, the training data is not protected, meaning that their methods do not consider the privacy threat associated with sharing unencrypted gradients with the server.\\\\
\paragraph{Combining privacy and BR.} Although scarce, there has been some work on tackling the problem of combining privacy and BR. For instance, \citet{twoServer_Jaggi} consider this problem for a different framework that includes two honest-but-curious {\em non-colluding} servers, a strong assumption that does not always hold in practice. Furthermore, their {\em additive secret sharing} scheme is rendered ineffective in our setting where there is a single honest-but-curious server that obtains information from all the workers. In the context of privacy, the single-server setting generalizes the multi-server setting with {\em colluding} servers. Another related work, the \brea{} framework, proposes the use of {\em verifiable secret sharing} amongst workers~\citep{brea}. However, the presented privacy scheme scales more poorly than DP mechanisms, and is infeasible in most distributed ML settings with no inter-worker communication. 
\citet{LearningChain} propose the \textit{LearningChain} framework that is claimed to combine DP and BR. However, \textit{LearningChain} is an experimental method, and Chen et al.~do not provide any formal guarantees either on the resilience or on the convergence of the proposed algorithm.

Recently, \citet{podc} studied the problem of satisfying both DP and BR in a single-server distributed SGD framework. While they demonstrate the computational hardness of this problem in practice, we go beyond by showing an inherent incompatibility between the supporting theory of $(\alpha,f)$-BR and the Gaussian mechanism from DP. Moreover, our approximate convergence result generalizes the prior works on BR. This generalization is critical to quantifying the interplay between DP and BR. Importantly, while \citet{podc} only give elementary analysis explaining the difficulty of the problem, we show that a careful analysis can help combine DP and BR.\\\\
\paragraph{Studying the interplay between DP and other notions of robustness.}
There has been a long line of work studying the interplay and mutual benefits of DP and robustness to data corruption in the centralized learning setting~\citep{DworkRobustStats2009,MaPoisoningandDP2019}. However, these works do not consider the problem of a distributed scenario with an honest-but-curious server, and they are not applicable to our setting. Furthermore, data corruption is actually a weaker threat than BR as the adversary cannot select its gradients online to disrupt the learning process.

Recently, there have been some work on the interplay between DP and robustness to evasion attacks (a.k.a.~adversarial examples). Interestingly, some findings in that line of research are similar to ours. DP and robustness to adversarial examples have been demonstrated to be very close from a high-level theoretical point of view even if their semantics are very different~\citep{DBLP:conf/sp/LecuyerAG0J19,pinot2019unified}. However, some recent works have pointed out that these two notions might be conflicting in some settings~\citep{MembershipInferenceinRobustnessShokri2019, ShokriPrivacyRiskofAdversarialDefense2019}. It is however worth noting that BR and robustness to adversarial examples are two orthogonal concepts. In particular, the robustness of a model (at testing time) to evasion attacks does not provide any guarantee on the robustness (at training time) to Byzantine behaviors. Similarly, as BR focuses on the training (optimization) procedure, we can always train models using a Byzantine resilient aggregation rule but without obtaining robustness to evasion attacks. The connection between these two notions of robustness remains an open problem. 

Finally,~\citet{Sun2019CanYouBackdoor} suggested that in the context of federated learning, differential privacy could help defend against backdoor attacks. However, this hypothesis got challenged by~\citet{WangSRVASLP20YesYouReallyAttackBackdoor}.

\section{Standard GARs With Associated Multiplicative Constants}\label{app:garConstant}
In this section, we present the different GARs used in our experiments, along with their associated VN conditions (Definition~\ref{VN_Condition_Original}) and multiplicative constants $\kappa_F(n,f)$.

\subsection{Krum}\label{krum}
\krum{} is an aggregation rule introduced under the assumption that $n \geq 2f + 3$. It consists in selecting the gradient which has the smallest mean squared distance, where the mean is computed over its $n-f-2$ closest gradients~\citep{krum}. Formally, let $\gradient{1}{}, \gradient{2}{}, ..., \gradient{n}{}$ be the gradients received by the parameter server. For any $i \in \{1, 2, ..., n\}$ and $j \neq i$, we denote by $i \rightarrow{} j$ the fact that $\gradient{j}{}$ is amongst the $n-f-2$ closest vectors (in distance) to $\gradient{i}{}$ within the submitted gradients. \krum{} assigns to each $\gradient{i}{}$ a score
\begin{equation}
    s_i = \sum\limits_{j: i \rightarrow{} j} \norm{\gradient{i}{} - \gradient{j}{}}^2,
\end{equation} and outputs the gradient with the lowest score.
\citet{krum} prove that \krum{} is $(\alpha, f)$-Byzantine resilient, assuming that the following VN condition is satisfied:
\begin{equation}\label{k_krum}
    2 \left ( n - f + \frac{f(n - f - 2) + f^2(n - f - 1)}{n - 2f -2}\right ) \expect{\norm{G(\theta)-\expect{G(\theta)}}^2} < \norm{\expect{G(\theta)}}^2.
\end{equation}
Therefore, the multiplicative constant for \krum{} is \begin{equation}
    \kappa_{Krum}(n,f) = \sqrt{2 \left ( n - f + \frac{f(n - f - 2) + f^2(n - f - 1)}{n - 2f -2}\right )}.
\end{equation}

\subsection{Minimum-Diameter Averaging (\brute{})}
\brute{} is an aggregation rule introduced under the assumption that $n \geq 2f + 1$.
It outputs the average of the $n - f$ most \textit{clumped} gradients among the received ones~\citep{brute_bulyan,MDA}. Formally, let $\mathcal{Q} = \{\gradient{1}{}, \gradient{2}{}, ..., \gradient{n}{}\}$ be the set of gradients received by the parameter server and let $\mathcal{T} = \{\mathcal{V} | \mathcal{V} \subset \mathcal{Q}, \vert\mathcal{V}\vert = n - f\}$ be the set of all subsets of $\mathcal{Q}$ of cardinality $n - f$. \brute{} chooses the set
\begin{equation}
    \mathcal{S} = \argmin\limits_{\mathcal{V} \in \mathcal{T}} \max\limits_{(\gradient{i}{}, \gradient{j}{}) \in \mathcal{V}^2} \norm{\gradient{i}{} - \gradient{j}{}}, 
\end{equation}
and outputs the average of the vectors in $ \mathcal{S}$.
\citet{brute_bulyan} prove that \brute{} is $(\alpha, f)$-Byzantine resilient, assuming that the following VN condition holds true:
\begin{equation*}
    \left( \frac{\sqrt{8} f}{n-f} \right)^2 \expect{\norm{G(\theta)-\expect{G(\theta)}}^2} < \norm{\expect{G(\theta)}}^2.
\end{equation*}
Therefore, the multiplicative constant for \brute{} is 
\begin{equation}
    \kappa_{MDA}(n,f) = \frac{\sqrt{8} f}{n-f}.
\end{equation}

\subsection{Median}
\citet{median} introduce the \median{} aggregation rule under the assumption that $n \geq 2f + 1$.
When using \median{}, the parameter server outputs the coordinate-wise median of the submitted gradients. We recall that every submitted gradient \gradient{i}{} $\in \R^d$, where $d$ is the number of parameters of the model. Formally, \median{} is defined as follows
\begin{align}
    \median{\left ( \gradient{1}{}, \gradient{2}{}, ..., \gradient{n}{}\right )} &= \begin{bmatrix}
           \text{\textit{\textbf{median}}} \left (\gradient{1}{}[1], \gradient{2}{}[1], ..., \gradient{n}{}[1] \right )\\
           \text{\textit{\textbf{median}}} \left (\gradient{1}{}[2], \gradient{2}{}[2], ..., \gradient{n}{}[2] \right )\\
           \vdots \\
           \text{\textit{\textbf{median}}} \left (\gradient{1}{}[d], \gradient{2}{}[d], ..., \gradient{n}{}[d] \right )\\
         \end{bmatrix}
\end{align}
where \gradient{i}{}$[j]$ is the $j'$th coordinate of \gradient{i}{}, and \textit{\textbf{median}} is the real-valued median. In other words, $$\text{\textit{\textbf{median}}} \left (x_1, x_2, ..., x_n \right ) = \argmin\limits_{x \in \R} \sum\limits_{i=1}^{n}|x_i - x|$$ where $x_1, x_2, ..., x_n \in \R$. The VN condition for \median{} is the following:
\begin{equation*}
    (n - f) \expect{\norm{G(\theta)-\expect{G(\theta)}}^2} < \norm{\expect{G(\theta)}}^2
\end{equation*}
Therefore, the multiplicative constant for \median{} is 
\begin{equation}
    \kappa_{Median}(n,f) = \sqrt{n-f}.
\end{equation}

\subsection{Bulyan}
\bulyan{} is an aggregation rule defined under the assumption that $n \geq 4f + 3$.
It is actually not an aggregation rule in the conventional sense, but rather an iterative method that repetitively uses an existing GAR~\citep{brute_bulyan}. In this paper, we use \bulyan{} on top of \krum{} defined above. Formally, \bulyan{} uses \krum{} $n -2f -2$ times iteratively, each time discarding the highest-scoring gradient. After that, the parameter server is left with a set $\mathcal{P}$ of the $n - 2f - 2$ "lowest-scoring" gradients selected by \krum{}, as mentioned in Appendix~\ref{krum}. \bulyan{} then outputs the average of the $n - 4f -2$ closest gradients to the coordinate-wise median of the $n -2f -2$ (selected) gradients $\in \mathcal{P}$. \\

The VN condition for \bulyan{} is the same as that of \krum{} (i.e., \eqref{k_krum}). Therefore, the multiplicative constant for \bulyan{} is 
\begin{equation}
\kappa_{Bulyan}(n,f) = \sqrt{2 \left ( n - f + \frac{f(n - f - 2) + f^2(n - f - 1)}{n - 2f -2}\right )}.
\end{equation}
\newpage
\section{Proofs omitted from the main paper}

\subsection{Technical background on privacy}
Before demonstrating Theorem~\ref{th:privacyguarantee}, we recall some classical tools from the DP literature. Below, we recall the definition of sensitivity, the privacy guarantee of the Gaussian noise injection, and the notion notion of privacy amplification by sub-sampling. 

\begin{definition}[Sensitivity]
Let $f: \mathcal{X}^b \rightarrow \mathbb{R}^d$. The sensitivity of $f$, denoted by $\Delta(f)$, is the maximum norm of the difference between the outcomes of $f$ when applied on any two adjacent datasets, i.e., \[\Delta(f) := \sup\limits_{D \sim D'} \norm{f(D) - f(D')},\]
where $D \sim D'$ denote the adjacency between the databases $D$ and $D'$ from $\mathcal{X}^b$.
\end{definition}

Using this notion of sensitivity, we can demonstrate that the Gaussian noise injection scheme (a.k.a. the Gaussian mechanism) satisfies $(\epsilon,\delta)$-DP for a well chosen noise injection parameter $s$.

\begin{lemma}[\cite{dwork2014algorithmic}]
\label{lemma:GaussianMechanism}
Let $f: \mathcal{X}^b \rightarrow \mathbb{R}^d$, $(\epsilon, \delta) \in (0,1)^2$, and $s > 0$. The scheme that takes $D\in \mathcal{X}^b$ as input, and outputs
\[ \mathcal{M}(D) = f(D) + y  ~ \text{ where } ~ y\sim \mathcal{N}(0,s^2 I_d)\] satisfies $(\epsilon, \delta)$-DP if $s \geq  \frac{\Delta(f)}{\epsilon} \sqrt{2\ln\left(1.25/ \delta\right)}$.
\end{lemma}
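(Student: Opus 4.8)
The plan is to bound the privacy loss random variable between the output distributions on two adjacent datasets, and then invoke the standard ``good event'' decomposition that turns such a bound into an $(\epsilon,\delta)$-DP guarantee. First I would fix two adjacent datasets $D \sim D'$ and note that, by definition of sensitivity, $\norm{f(D) - f(D')} \leq \Delta(f)$. Writing $u = f(D)$ and $v = f(D')$, the two output distributions are $\mechanism{D} \sim \normal{u}{s^2 I_d}$ and $\mechanism{D'} \sim \normal{v}{s^2 I_d}$. Since a Gaussian density depends on its argument only through the distance to the mean, the privacy loss at a point $z$,
\[
L(z) = \ln \frac{p_{\mechanism{D}}(z)}{p_{\mechanism{D'}}(z)} = \frac{1}{2 s^2}\left(\norm{z - v}^2 - \norm{z - u}^2\right),
\]
involves $u$ and $v$ only through the difference $u - v$. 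In particular the analysis collapses to a one-dimensional problem, and every bound will depend on the means solely via $\norm{u - v} \leq \Delta(f)$.

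Next I would draw $z = u + y$ with $y \sim \normal{0}{s^2 I_d}$ (so that $z \sim \mechanism{D}$) and expand the squared norms to obtain
\[
L(z) = \frac{\norm{u - v}^2}{2 s^2} + \frac{\iprod{u - v}{y}}{s^2}.
\]
The cross term $\iprod{u - v}{y}$ is a linear functional of a Gaussian, hence $\iprod{u - v}{y} \sim \normal{0}{\norm{u - v}^2 s^2}$, so $L(z)$ is itself Gaussian with mean $\norm{u-v}^2 / (2 s^2)$ and standard deviation $\norm{u - v}/s$. Setting $\Delta = \norm{u - v} \leq \Delta(f)$ and standardizing, the task reduces to showing that
\[
\mathbb{P}\left[ L(z) > \epsilon \right] = \mathbb{P}\left[ Z > \frac{s \epsilon}{\Delta} - \frac{\Delta}{2 s} \right] \leq \delta, \qquad Z \sim \normal{0}{1},
\]
and, since the threshold is decreasing in $s$, it suffices to verify this at the worst case $\Delta = \Delta(f)$ with $s$ at its lower bound.

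Finally I would close the argument with the standard decomposition: for any measurable output set $O$, conditioning on the event $\{ L(z) \leq \epsilon \}$ and its complement yields
\[
\mathbb{P}\left[ \mechanism{D} \in O \right] \leq e^{\epsilon}\, \mathbb{P}\left[ \mechanism{D'} \in O \right] + \mathbb{P}\left[ L(z) > \epsilon \right],
\]
which is exactly $(\epsilon,\delta)$-DP once the tail probability is controlled by $\delta$. The main obstacle is precisely this last tail estimate: bounding $\mathbb{P}[ Z > s\epsilon/\Delta - \Delta/(2s) ]$ sharply enough to recover the exact constant $1.25$ under the hypothesis $s \geq \tfrac{\Delta(f)}{\epsilon}\sqrt{2 \ln(1.25/\delta)}$. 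This is a non-asymptotic computation rather than a conceptual difficulty: it uses the Mills-ratio bound $\mathbb{P}[Z > t] \leq \tfrac{1}{t\sqrt{2\pi}} e^{-t^2/2}$ for $t > 0$, together with a careful accounting of the lower-order terms (relying on $\epsilon, \delta \in (0,1)$ to discard them). As the result is classical, I would alternatively defer this final calculation to \cite{dwork2014algorithmic}.
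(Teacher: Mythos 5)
The paper does not prove this lemma at all: it is imported verbatim from \cite{dwork2014algorithmic} (the classical Gaussian-mechanism theorem), so there is no in-paper argument to compare your proposal against. Your sketch is a correct reconstruction of the classical proof from that reference --- the privacy-loss random variable, its Gaussian law with mean $\Delta^2/(2s^2)$ and standard deviation $\Delta/s$, the good-event decomposition, and the Mills-ratio tail estimate that produces the constant $1.25$ under $(\epsilon,\delta) \in (0,1)^2$ --- and deferring that final calculation to the same reference is exactly what the paper itself does. One small slip: the threshold $\frac{s\epsilon}{\Delta} - \frac{\Delta}{2s}$ is \emph{increasing} in $s$ (and decreasing in $\Delta$), not decreasing in $s$; that monotonicity is the correct reason why the worst case is $s$ at its lower bound and $\Delta = \Delta(f)$, which is the conclusion you in fact draw.
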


Finally, let us introduce the concept of privacy amplification by sub-sampling. Here, we study sub-sampling without replacement defined as follows.

\begin{definition}(Sub-sampling)
Given a dataset $D \in \mathcal{X}^m$ and a constant $b \in [m]$, the procedure $\textsc{sub-sample}_{m \rightarrow b}: \mathcal{X}^m \rightarrow \mathcal{X}^b$ selects $b$ points at random and without replacement from $D$. 
\end{definition}

This sub-sampling procedure has been widely studied in the privacy preserving literature and is known to provide privacy amplification. In particular,~\citet{Balle2018Subsampling} demonstrated that it satisfies the following privacy amplification lemma.

\begin{lemma}[\citet{Balle2018Subsampling}]
\label{lemma:subsampling}
Let $\epsilon > 0$, $\delta \in (0,1)$, $b \in [m]$, and $\mathcal{O}$ be an arbitrary output space.
Let $\mathcal{M}: \mathcal{X}^b \rightarrow \mathcal{O}$ be an $(\epsilon,\delta)$-DP algorithm and $\mathcal{M}': \mathcal{X}^m \rightarrow \mathcal{O}$ defined as $ \mathcal{M}':= \mathcal{M} \circ \textsc{sub-sample}_{m \rightarrow b}$. Then $\mathcal{M}'$ is $( \epsilon', \delta')$-DP, with $\epsilon' = \ln\left(1+ \frac{b}{m}(e^{\epsilon} -1)\right)$ and $\delta'=\frac{b \delta}{m}$.
\end{lemma}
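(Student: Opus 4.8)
The plan is to pass to the hockey-stick divergence, for which $(\epsilon,\delta)$-DP has a clean characterization, and then to run an \emph{advanced joint convexity} argument. For $\gamma \geq 1$ and measures $\nu,\nu'$ on $\mathcal{O}$ define $D_\gamma(\nu\|\nu') := \sup_{O}\left(\nu(O) - \gamma\,\nu'(O)\right) = \int (p - \gamma p')_+\,d\mu$, where $p,p'$ are densities and $(\cdot)_+ = \max\{\cdot,\,0\}$. A mechanism is $(\epsilon,\delta)$-DP iff $D_{e^\epsilon}(\mathcal{M}(D_0)\|\mathcal{M}(D_1)) \leq \delta$ for every ordered pair of adjacent datasets. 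Writing $\rho := b/m$, $\gamma := e^{\epsilon}$ and $\gamma' := e^{\epsilon'} = 1 + \rho(\gamma - 1)$, it therefore suffices to show $D_{\gamma'}(\mathcal{M}'(D)\|\mathcal{M}'(D')) \leq \delta' = \rho\,\delta$ for adjacent $D,D' \in \mathcal{X}^m$ differing in a single index $i$ (the reverse ordering is identical by symmetry of the adjacency relation and of $(\epsilon,\delta)$-DP).

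First I would decompose the sub-sampled output laws according to whether the random index set $T$ of size $b$ contains $i$. Conditioned on $i \notin T$ (probability $1-\rho$) the sub-samples of $D$ and $D'$ coincide, giving a common output law $\mu_0$; conditioned on $i \in T$ (probability $\rho$) they give laws $P$ (for $D$) and $P'$ (for $D'$). Hence $\mathcal{M}'(D) = (1-\rho)\mu_0 + \rho P$ and $\mathcal{M}'(D') = (1-\rho)\mu_0 + \rho P'$. The algebraic heart of the proof is the identity, obtained by a direct density computation in which the choice $\gamma' = 1 + \rho(\gamma-1)$ cancels the $\mu_0$ contribution: with $\beta := \gamma'/\gamma \in [0,1]$ and $\tilde P := (1-\beta)\mu_0 + \beta P'$,
\begin{equation*}
D_{\gamma'}\!\left((1-\rho)\mu_0 + \rho P \,\middle\|\, (1-\rho)\mu_0 + \rho P'\right) = \rho\, D_{\gamma}(P\,\|\,\tilde P).
\end{equation*}
It then remains to show $D_{\gamma}(P\|\tilde P) \leq \delta$, after which $D_{\gamma'}(\mathcal{M}'(D)\|\mathcal{M}'(D')) = \rho\,D_\gamma(P\|\tilde P) \leq \rho\,\delta = \delta'$ concludes the argument.

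To bound $D_\gamma(P\|\tilde P)$ I would use that $\nu' \mapsto D_\gamma(P\|\nu')$ is convex (its integrand $(p - \gamma\,\cdot)_+$ is convex), so that $D_\gamma(P\|\tilde P) \leq (1-\beta)\,D_\gamma(P\|\mu_0) + \beta\,D_\gamma(P\|P')$; this reduces the claim to the two bounds $D_\gamma(P\|P') \leq \delta$ and $D_\gamma(P\|\mu_0) \leq \delta$. Both follow from joint convexity of $D_\gamma$ (itself a consequence of joint convexity of $(u,v)\mapsto(u-\gamma v)_+$) combined with the $(\epsilon,\delta)$-DP of $\mathcal{M}$, provided each divergence is realized by coupling $P$ and the second argument over a common index set whose matched components come from \emph{adjacent} size-$b$ datasets. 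For $D_\gamma(P\|P')$ this is immediate: coupling through the same $T \ni i$, the datasets $D_T$ and $D'_T$ differ only in the slot at $i$ (value $x$ versus $x'$) and hence are adjacent.

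The genuinely delicate step --- and the one I expect to be the main obstacle --- is $D_\gamma(P\|\mu_0) \leq \delta$, together with the realization that it is needed at all: a naive substitution of $(\epsilon,\delta)$-DP between $P$ and $P'$ fails to amplify, precisely because the shared mass $\mu_0$ must be accounted for, and the advanced-joint-convexity identity forces the comparison of $P$ against $\mu_0$. Here $P$ is supported on sub-samples containing $i$ while $\mu_0$ is supported on those omitting it, so a priori their index sets differ in more than one position. The fix is to couple $T \ni i$, written $T = \{i\}\cup S$, with $T' = S \cup \{z\}$ for $z \notin T$: then $D_T$ and $D_{T'}$ share the $S$-part and differ in a single entry ($x$ versus the datum at $z$), hence are adjacent. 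The admissible pairs form a biregular bipartite graph (left-degree $m-b$, right-degree $b$), so the uniform distribution over its edges has exactly the required uniform marginals on each side; joint convexity along this coupling then gives $D_\gamma(P\|\mu_0) \leq \expect{D_\gamma(\mathcal{M}(D_T)\|\mathcal{M}(D_{T'}))} \leq \delta$. This without-replacement swap coupling is where the combinatorial structure of sub-sampling enters, and checking that its marginals come out uniform is the crux; the remaining pieces (the divergence reformulation, the cancellation algebra, and the convexity reduction) are routine once the coupling is in hand.
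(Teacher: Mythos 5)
The paper itself does not prove Lemma~\ref{lemma:subsampling}---it imports it from \citet{Balle2018Subsampling}---and your argument correctly reconstructs that source's original proof: the hockey-stick characterization of $(\epsilon,\delta)$-DP, the mixture decomposition conditioned on whether the replaced index is sampled, the advanced-joint-convexity identity with $\gamma' = 1 + \frac{b}{m}(\gamma - 1)$ and $\beta = \gamma'/\gamma$, and the without-replacement swap coupling (whose biregularity check, $\binom{m-1}{b-1}(m-b) = b\binom{m-1}{b}$, is exactly right) to bound $D_\gamma(P\,\|\,\mu_0)\leq\delta$. In short: correct, and essentially the same route as the proof in the cited reference.
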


\subsection{Proof of Theorem~\ref{th:privacyguarantee}}

\begin{theorem*}
Suppose that Assumption~\ref{asp:bnd_norm} holds true. Let $b \in [m]$ and $(\epsilon, \delta) \in (0,1)^2$. Consider Algorithm~\ref{algo} with $s = \frac{2C 
}{b \ln\left(  (e^{\epsilon} -1)\frac{m}{b} +1 \right)} \sqrt{2\ln\left(\frac{1.25 b}{m \delta}\right)}$. Then, each honest worker satisfies $(\epsilon,\delta)$-DP at every step $t \in [T]$ of the procedure.
\end{theorem*}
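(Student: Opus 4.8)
The plan is to recognize that the per-worker computation in Algorithm~\ref{algo} is exactly the composition of a sub-sampling step with the Gaussian mechanism, and then to combine Lemmas~\ref{lemma:GaussianMechanism} and~\ref{lemma:subsampling}, solving \emph{backwards} for the inner privacy budget from which the amplification step must start. Concretely, at a fixed $\theta = \weight{t}$, each honest worker first draws a batch $B_t$ of size $b$ without replacement from $D$ (this is $\textsc{sub-sample}_{m \rightarrow b}$), and then releases $f(B_t) + y$ with $y \sim \mathcal{N}(0, s^2 I_d)$ and $f(B) = \frac{1}{b}\sum_{x \in B}\nabla q(\weight{t}, x)$. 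Thus the released gradient is $\mathcal{M}'(D)$ with $\mathcal{M}' = \mathcal{M} \circ \textsc{sub-sample}_{m \rightarrow b}$, where $\mathcal{M}$ is the Gaussian mechanism acting on $\mathcal{X}^b$, so it suffices to control the two composed steps.

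First I would bound the sensitivity of $f$. For two adjacent batches $B \sim B'$ (differing in a single point), only one summand changes, so by the triangle inequality and Assumption~\ref{asp:bnd_norm},
\begin{equation*}
\norm{f(B) - f(B')} \leq \tfrac{1}{b}\left(\norm{\nabla q(\weight{t}, x)} + \norm{\nabla q(\weight{t}, x')}\right) \leq \tfrac{2C}{b},
\end{equation*}
hence $\Delta(f) \leq 2C/b$ uniformly in $\theta$. By Lemma~\ref{lemma:GaussianMechanism}, the inner mechanism $\mathcal{M}$ is then $(\epsilon_0, \delta_0)$-DP as soon as $s \geq \frac{2C}{b \epsilon_0}\sqrt{2\ln(1.25/\delta_0)}$.

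Next I would choose $(\epsilon_0, \delta_0)$ so that, after amplification, the output budget is exactly the target $(\epsilon, \delta)$. Setting the conclusion of Lemma~\ref{lemma:subsampling} equal to $(\epsilon, \delta)$ gives $\epsilon = \ln(1 + \frac{b}{m}(e^{\epsilon_0} - 1))$ and $\delta = \frac{b\delta_0}{m}$; inverting these yields
\begin{equation*}
\epsilon_0 = \ln\!\left((e^\epsilon - 1)\tfrac{m}{b} + 1\right), \qquad \delta_0 = \tfrac{m\delta}{b}.
\end{equation*}
Substituting these two values into the threshold from the previous step recovers \emph{exactly} the prescribed $s = \frac{2C}{b\ln((e^\epsilon - 1)\frac{m}{b}+1)}\sqrt{2\ln(\frac{1.25 b}{m\delta})}$, so the stated $s$ makes $\mathcal{M}$ satisfy the Gaussian-mechanism bound with equality; hence $\mathcal{M}$ is $(\epsilon_0,\delta_0)$-DP and, by the amplification lemma, $\mathcal{M}'$ is $(\epsilon, \delta)$-DP. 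Since the argument is identical at every step $t$ and every $\theta$, the per-step per-worker guarantee follows.

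The main obstacle will be the bookkeeping around the \emph{validity range} of the intermediate budget: Lemma~\ref{lemma:GaussianMechanism} requires $(\epsilon_0, \delta_0) \in (0,1)^2$, so one must verify that the inverted values lie in this range (notably $\delta_0 = m\delta/b$ and $\epsilon_0 = \ln(1 + \frac{m}{b}(e^\epsilon-1))$, both of which can exceed $1$ when $m/b$ is large) for the quoted Gaussian-mechanism threshold to apply; outside this regime a tighter Gaussian-mechanism analysis would be needed. The rest is purely algebraic—correctly inverting the amplification map $\epsilon_0 \mapsto \ln(1 + \frac{b}{m}(e^{\epsilon_0}-1))$ and ensuring the order of composition (sub-sample, then Gaussian) matches the hypothesis of Lemma~\ref{lemma:subsampling}.
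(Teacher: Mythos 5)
Your proposal is correct and takes essentially the same route as the paper's own proof: bound the per-batch sensitivity by $2C/b$ via Assumption~\ref{asp:bnd_norm}, apply Lemma~\ref{lemma:GaussianMechanism} with the inverted inner budget $\epsilon_0 = \ln\left((e^{\epsilon}-1)\tfrac{m}{b}+1\right)$, $\delta_0 = \tfrac{m}{b}\delta$, and conclude with the sub-sampling amplification of Lemma~\ref{lemma:subsampling}. The validity-range caveat you flag is real but applies equally to the paper, whose proof also invokes Lemma~\ref{lemma:GaussianMechanism} at $(\epsilon_0,\delta_0)$ even though these inverted parameters can exceed $1$ when $m/b$ is large, outside the lemma's stated hypothesis $(\epsilon,\delta)\in(0,1)^2$.
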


\begin{proof} Let $t \in [T]$ be an arbitrary step of Algorithm~\ref{algo} and $\theta_t$ the parameter at step $t$. Let us consider an arbitrary honest worker $\worker{}{i}$. Note that the batch on which $\worker{}{i}$ computes its gradient estimate is constituted of $b$ points randomly sampled without replacement from $D$. Hence we can write $B_t = \textsc{Sub-Sample}_{m \rightarrow b}(D)$. We now denote by $f: \mathcal{X}^b \rightarrow \mathbb{R}^d$ the function that evaluates the mean gradient at $\weight{t}$ using $B_t$. Specifically, 
\begin{align}
      f\left(B_t\right) := \frac{1}{b}\sum\limits_{\datapoint{}{}\in B_t} \nabla q\left(\weight{t}, \, \datapoint{}{} \right), \text{ for any batch }B_t \in \mathcal{X}^b.
\end{align}
We denote by $\mathcal{M}: \mathcal{X}^b \rightarrow \mathbb{R}^d$ the noise injection scheme, i.e., for any $B_t \in \mathcal{X}^b$,
\begin{align}
      \mathcal{M}\left(B_t\right) := f\left(B_t\right) + y_t; \quad y_t \sim \mathcal{N}(0, \, s^2 I_d).
\end{align}
Following the above notation, at step $t$, the honest worker $\worker{}{i}$ computes the noisy gradient estimate $\gradient{i}{t} = \mathcal{M}\circ\textsc{Sub-Sample}_{m \rightarrow b}(D)$. Hence, it suffices to show that $\mathcal{M}\circ\textsc{Sub-Sample}_{m \rightarrow b}$ satisfies $(\epsilon,\delta)$-DP to conclude the proof. 

Since two adjacent datasets can only differ on one row, using Assumption~\ref{asp:bnd_norm}, we have that $\Delta(f) \leq \frac{2 C}{b}$. In particular, this implies that
$
    s \geq \frac{\Delta(f)}{ \ln\left(  (e^{\epsilon} -1)\frac{m}{b} +1 \right)} \sqrt{2\ln\left(\frac{1.25 b}{m \delta}\right)}.
$
Then, according to Lemma~\ref{lemma:GaussianMechanism}, $\mathcal{M}$ is $\left( \epsilon', \frac{m}{b}\delta \right)$-DP with $\epsilon'=\ln\left(  (e^{\epsilon} -1)\frac{m}{b} +1 \right)$. Finally, it suffices to use Lemma~\ref{lemma:subsampling} to conclude that $\mathcal{M}\circ \textsc{Sub-Sample}_{m \rightarrow b}$ is $(\epsilon,\delta)$-differentially private.
\end{proof}
\subsection{Proof of Proposition~\ref{Prop:VNConditionBroken}}
\begin{proposition*}
Let $b \in [m]$.
Consider Algorithm~\ref{algo} with $s>0$. If Assumption~\ref{asp:lip} holds true, then there exists no GAR that satisfies the VN condition.
\end{proposition*}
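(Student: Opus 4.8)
The plan is to exploit a simple but decisive fact: the additive Gaussian noise contributes a \emph{fixed, strictly positive} variance floor $d s^2$ that does not shrink as $\norm{\nabla Q(\theta)} \to 0$, so near any critical point the left-hand side of the VN inequality cannot be made smaller than the right-hand side. First I would compute the first two moments of $G(\theta)$ from its definition in~(\ref{eqn:def_G}). Writing $Z(\theta) = \frac{1}{b}\sum_{x \in B}\nabla q(\theta, x) - \nabla Q(\theta)$ for the (zero-mean) sub-sampling fluctuation and recalling that $y \sim \mathcal{N}(0, s^2 I_d)$ is independent of $B$ with $\expect{y}=0$, one obtains $\expect{G(\theta)} = \nabla Q(\theta)$ and, by independence,
\begin{equation*}
\expect{\norm{G(\theta) - \expect{G(\theta)}}^2} = \expect{\norm{Z(\theta)}^2} + \expect{\norm{y}^2} = \expect{\norm{Z(\theta)}^2} + d s^2 \geq d s^2 > 0 .
\end{equation*}
This $\theta$-independent floor $d s^2$ is the engine of the whole argument (and is exactly the $d s^2$ term that reappears in $\sigma$ in Theorem~\ref{thm:approx_resilience}).

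Next I would fix an arbitrary GAR $F$ with multiplicative constant $\kappa_F(n,f) > 0$ and set $\eta_0 := \kappa_F(n,f)\sqrt{d}\, s > 0$. Combining the moment computation with Definition~\ref{VN_Condition_Original}, the VN inequality is \emph{violated} at any parameter $\theta$ that lies in the scope of the condition, i.e. with $\nabla Q(\theta) \neq 0$ (equivalently $G(\theta)$ has nonzero mean), yet satisfies $\norm{\nabla Q(\theta)}^2 \leq \kappa_F(n,f)^2 d s^2 = \eta_0^2$: indeed $\kappa_F(n,f)^2 \expect{\norm{G(\theta)-\expect{G(\theta)}}^2} \geq \kappa_F(n,f)^2 d s^2 \geq \norm{\nabla Q(\theta)}^2$. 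It therefore suffices to exhibit a single $\theta$ with $0 < \norm{\nabla Q(\theta)} \leq \eta_0$.

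To locate such a $\theta$ I would invoke the continuity of $\nabla Q$, which Assumption~\ref{asp:lip} supplies, together with the standing hypotheses that $Q$ admits a critical point $\theta^\star$ (so $\nabla Q(\theta^\star)=0$) while $\nabla Q$ is not identically zero (so there exists $\theta_0$ with $\nabla Q(\theta_0) \neq 0$). Along the segment $\theta(\lambda) = (1-\lambda)\theta^\star + \lambda\,\theta_0$, the map $\lambda \mapsto \norm{\nabla Q(\theta(\lambda))}$ is continuous, vanishes at $\lambda = 0$, and is strictly positive at $\lambda = 1$. By the intermediate value theorem it attains every value in $(0, \norm{\nabla Q(\theta_0)}]$; in particular it attains a value in $(0, \eta_0]$ (take $\theta_0$ itself when $\norm{\nabla Q(\theta_0)} \leq \eta_0$, and otherwise the intermediate point where the norm equals $\eta_0$). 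This produces the required in-scope parameter at which the VN inequality fails. Since $\kappa_F(n,f)$ was arbitrary, no GAR satisfies the VN condition, which is the claim.

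The main obstacle is conceptual rather than computational: one must recognise that the Gaussian mechanism pins the gradient variance at a floor $d s^2$ that is decoupled from $\norm{\nabla Q(\theta)}$, and then produce an in-scope parameter whose gradient norm nonetheless dips below the GAR-dependent threshold $\eta_0$. The continuity of $\nabla Q$ from smoothness and the non-triviality of $Q$ (a critical point coexisting with a point of nonzero gradient) are precisely what make such a parameter available via the intermediate value step; without them—if, say, $\nabla Q$ were bounded away from $0$ wherever it is nonzero, or merely differentiable without a continuous gradient—the construction could break down, which explains why Assumption~\ref{asp:lip} enters the statement.
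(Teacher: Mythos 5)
Your proposal is correct and takes essentially the same route as the paper: the same variance decomposition giving the noise floor $\expect{\norm{G(\theta)-\expect{G(\theta)}}^2} \geq d s^2$, followed by exhibiting an in-scope parameter $\theta'$ with $0 < \norm{\nabla Q(\theta')} \leq \kappa_F(n,f)\sqrt{d}\,s$, which contradicts the strict VN inequality. The only difference is how that parameter is produced --- the paper selects a non-critical point within distance $\kappa_F(n,f)\sqrt{d}\,s/L$ of a critical point and applies the Lipschitz bound directly, whereas you apply the intermediate value theorem to $\lambda \mapsto \norm{\nabla Q((1-\lambda)\theta^{\star} + \lambda\theta_0)}$; this is a minor variation that, if anything, gives a cleaner justification of the paper's terse ``without loss of generality'' choice of $\theta^{*}$ and $\theta'$.
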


\begin{proof}
Let us consider an arbitrary GAR $F$ with multiplicative constant $\kappa_F(n,f)>0$. We denote the set of critical points of $Q$ by $\Theta^*:=\{\theta \in \mathbb{R}^d \mid \nabla Q(\theta) =0\}$. While considering Algorithm~\ref{algo}, the random variable that characterizes the gradients sent by the honest workers at a given parameter vector is defined as follows, for all $\theta \in \mathbb{R}^d$,
\begin{equation}
     G(\theta)= \frac{1}{b} \sum\limits_{x \in B} \nabla q\left(\weight{}, \, \datapoint{}{}\right) + y \nonumber
\end{equation}
where $B$ is a set of $b$ points randomly sampled without replacement from $D$ (denoted $B \sim D^b_{WR}$) and $y \sim \mathcal{N}(0,s^2 I_d)$. To show that the VN condition (in Definition~\ref{VN_Condition_Original}) does not hold true, we show that there exists $\theta' \in \mathbb{R}^d \backslash \Theta^*$ such that  
\begin{equation} 
% \label{eq:VNdoesnothold}
    \kappa_F(n,f)^2 \expect{\norm{G(\theta')-\expect{G(\theta')}}^2}  \geq  \norm{\expect{G(\theta')}}^2.  \nonumber
\end{equation} 
For doing so, we first observe that for any $\theta \in \mathbb{R}^d$, $G(\theta)$ is an unbiased estimator of $\nabla Q(\theta)$, i.e., $\expect{G(\theta)} = \nabla Q(\theta)$. Furthermore, note that the injected noise $y$ is independent from the stochasticity of gradient estimate $\left(\frac{1}{b} \sum\limits_{x \in B}\nabla q\left(\weight{}, \, \datapoint{}{}\right) \right)$. Hence, for all $\theta \in \mathbb{R}^d$,
\begin{align} 
    \expect{\norm{G(\theta)-\expect{G(\theta)}}^2} &=  \condexpect{B \sim D^b_{WR}}{\norm{ \frac{1}{b} \sum\limits_{x \in B}\nabla q\left(\weight{}, \, \datapoint{}{}\right) - \nabla Q\left(\weight{}\right)}^2} + \condexpect{y \sim \mathcal{N}(0, s^2 I_d)}{\norm{y}^2} \nonumber \\
    &\geq  \condexpect{y \sim \mathcal{N}(0, s^2 I_d)}{\norm{y}^2} =
    \text{Tr}(s^2 I_d) =
    d s^2. \label{eq:noisedecomposition}
\end{align} 

As $Q$ admits non-trivial minima, we know that $\Theta^* \neq \mathbb{R}^d$. Accordingly, there exists $\theta^* \in \Theta^*$ and $\theta' \in  \mathbb{R}^d \backslash \Theta^*$. Without loss of generality, we can always take $\theta^*$ and $\theta'$ such that 
\begin{align*}
    \norm{\theta' - \theta^*} \leq \frac{\kappa_F(n,f) \sqrt{d} s}{L}, 
\end{align*}
where $L$ is the constant defined in Assumption~\ref{asp:lip}. Thus, using Assumption~\ref{asp:lip} we get 
\begin{equation}
    \norm{\nabla Q(\theta')} = \norm{\nabla Q(\theta') - \nabla Q(\theta^*)} \leq L \norm{\theta' - \theta^*} \leq \kappa_F(n,f) \sqrt{d} s. \label{eq:upperboundgradient}
\end{equation}
Furthermore, thanks to (\ref{eq:noisedecomposition}) we know that 
\begin{equation}
    \kappa_F(n,f)^2 \expect{\norm{G(\theta')-\expect{G(\theta')}}^2} \geq \kappa_F(n,f)^2 d s^2.\label{eq:noisedecomposition2}
\end{equation}
Finally, using (\ref{eq:upperboundgradient}) and (\ref{eq:noisedecomposition2}) we obtain that
\begin{align*}
     \kappa_F(n,f)^2 \expect{\norm{G(\theta')-\expect{G(\theta')}}^2}\geq \kappa_F(n,f)^2 d s^2 \geq \norm{\nabla Q(\theta')}^2 = \norm{\expect{G(\theta')}}^2.
\end{align*}
The above concludes the proof.
\end{proof}

\subsection{Proof of Theorem~\ref{thm:approx_resilience}}
\label{app:approx_resilience}

Before we prove the theorem, we note the following implication of Assumption~\ref{asp:bnd_var}.

\begin{lemma}
\label{lem:bnd_var}
Under Assumption~\ref{asp:bnd_var}, for a given parameter $\theta$,
\begin{align*}
    \expect{ \norm{\frac{1}{b} \sum\limits_{x \in B} \nabla q\left(\weight{}, \, \datapoint{}{}\right) - \nabla Q(\weight{})}^2} \leq \V^2
\end{align*}
where recall that $B$ is a batch of $b$ data points chosen randomly from dataset $D$.
\end{lemma}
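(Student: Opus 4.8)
The plan is to recognize this as a standard finite-population variance computation for sampling without replacement. First I would center the per-sample gradients by defining $Z_x := \nabla q(\theta, x) - \nabla Q(\theta)$ for each $x \in D$. Since $\nabla Q(\theta) = \frac{1}{m}\sum_{x \in D}\nabla q(\theta, x)$ follows by differentiating the definition of $Q$ in (\ref{eqn:exp_loss}), these centered vectors sum to zero: $\sum_{x \in D} Z_x = 0$. Assumption~\ref{asp:bnd_var} then reads exactly $\frac{1}{m}\sum_{x \in D}\norm{Z_x}^2 \leq \V^2$, and the quantity to bound becomes $\expect{\norm{\frac{1}{b}\sum_{x \in B} Z_x}^2}$, the mean-squared deviation of the batch mean from the full-batch gradient.

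Next I would expand the squared norm of the batch sum as a double sum over ordered pairs, $\norm{\sum_{x \in B} Z_x}^2 = \sum_{x \in B}\norm{Z_x}^2 + \sum_{x \neq y \in B}\langle Z_x, Z_y\rangle$, and pass to expectations using inclusion indicators. For drawing $b$ points without replacement from $m$, the first-order inclusion probability is $b/m$ and the second-order (pair) inclusion probability is $\frac{b(b-1)}{m(m-1)}$. Substituting gives $\expect{\norm{\sum_{x\in B}Z_x}^2} = \frac{b}{m}\sum_{x\in D}\norm{Z_x}^2 + \frac{b(b-1)}{m(m-1)}\sum_{x\neq y}\langle Z_x, Z_y\rangle$, where the second sum ranges over ordered pairs in the full dataset $D$.

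The key simplification comes from the centering identity: since $\sum_x Z_x = 0$, we have $\sum_{x \neq y}\langle Z_x, Z_y\rangle = \norm{\sum_x Z_x}^2 - \sum_x\norm{Z_x}^2 = -\sum_x\norm{Z_x}^2$. Writing $S := \sum_{x\in D}\norm{Z_x}^2$, the two terms combine into $\frac{b}{m}S\left(1 - \frac{b-1}{m-1}\right) = \frac{b}{m}\cdot\frac{m-b}{m-1}\,S$. Dividing by $b^2$ yields the familiar finite-population-correction form $\expect{\norm{\frac{1}{b}\sum_{x\in B}Z_x}^2} = \frac{1}{b}\cdot\frac{m-b}{m-1}\cdot\frac{S}{m} \leq \frac{1}{b}\cdot\frac{m-b}{m-1}\,\V^2$. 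Since $b \geq 1$ gives $\frac{1}{b}\leq 1$ and $b \leq m$ gives $\frac{m-b}{m-1}\leq 1$, both prefactors are at most one, so the right-hand side is bounded by $\V^2$, which is exactly the claim.

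I do not expect any serious obstacle here; the only point requiring care is the without-replacement sampling, where the off-diagonal covariance term does not vanish (unlike the with-replacement case, where independence would make it disappear). The clean handling of this term hinges on the centering identity $\sum_x Z_x = 0$, which turns the troublesome cross-sum into $-S$ and produces the correction factor $\frac{m-b}{m-1} \leq 1$. As an alternative that avoids the explicit pair computation, one could instead invoke the standard fact that sampling without replacement never increases the variance of the sample mean relative to sampling with replacement, for which independence immediately yields the bound $\V^2/b \leq \V^2$.
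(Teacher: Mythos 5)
Your proof is correct, but it takes a genuinely different route from the paper's. The paper never touches the cross terms: it first applies the triangle inequality together with convexity of $(\cdot)^2$ (i.e., Jensen) to get the pointwise bound $\norm{\frac{1}{b}\sum_{x \in B} Z_x}^2 \leq \frac{1}{b}\sum_{x \in B}\norm{Z_x}^2$, and then only needs the first-order inclusion probability $\nicefrac{b}{m}$ (written there as $\binom{m-1}{b-1}/\binom{m}{b}$) to conclude that the expectation is at most $\frac{1}{b}\cdot\frac{b}{m}\,S = \nicefrac{S}{m} \leq \V^2$, with $S := \sum_{x \in D}\norm{Z_x}^2$. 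You instead carry out the exact second-moment computation for sampling without replacement, invoking the pair inclusion probability $\frac{b(b-1)}{m(m-1)}$ and the centering identity $\sum_{x \in D} Z_x = 0$, which yields the equality
\begin{align*}
\expect{\norm{\frac{1}{b}\sum_{x \in B} Z_x}^2} = \frac{m-b}{b(m-1)}\cdot\frac{S}{m},
\end{align*}
i.e., the classical finite-population correction. What each approach buys: the paper's Jensen step is shorter and sidesteps all covariance bookkeeping, but it discards the $\nicefrac{1}{b}$ gain, so its bound $\V^2$ is loose and does not even vanish at $b = m$, where the batch gradient equals $\nabla Q(\theta)$ exactly. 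Your bound $\frac{m-b}{b(m-1)}\V^2$ is strictly tighter, and propagating it downstream would actually sharpen the paper's own conclusions: the constant $\sigma^2 = \V^2 + d s^2 + C^2$ in Theorem~\ref{thm:approx_resilience} and the $\V^2$ term in the sufficient condition of Proposition~\ref{prop:relax} would both acquire the factor $\frac{m-b}{b(m-1)}$, so the documented benefit of increasing the batch size would show up in the sampling-noise term as well, not only in the DP-noise term. The only cosmetic caveat is the degenerate case $m = 1$ (hence $b = 1$), where the correction factor is formally $0/0$ but the left-hand side is identically zero, so the claim holds trivially.
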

\begin{proof}
Consider an arbitrary $B$. Then,   
\begin{align*}
    \norm{\frac{1}{b} \sum\limits_{x \in B} \nabla q\left(\weight{}, \, \datapoint{}{}\right) - \nabla Q(\weight{})}^2 = \norm{\frac{1}{b} \sum\limits_{x \in B} \left( \nabla q\left(\weight{}, \, \datapoint{}{}\right) - \nabla Q(\weight{}) \right)}^2.
\end{align*}
By triangle inequality, and the fact that $(\cdot)^2$ is a convex function, we obtain that
\begin{align*}
    \norm{\frac{1}{b} \sum\limits_{x \in B} \left( \nabla q\left(\weight{}, \, \datapoint{}{}\right) - \nabla Q(\weight{}) \right)}^2 & \leq \left(\frac{1}{b} \sum\limits_{x \in B} \norm{\nabla q\left(\weight{}, \, \datapoint{}{}\right) - \nabla Q(\weight{})} \right)^2 \\
    & \leq \frac{1}{b} \sum\limits_{x \in B} \norm{\nabla q\left(\weight{}, \, \datapoint{}{}\right) - \nabla Q(\weight{})}^2.
\end{align*}
Recall that $B$ is a set of $b$ points randomly sampled without replacement from $D$, which we denote by $B \sim D^b_{WR}$. Thus, given $\weight{}$, 
\[\expect{\norm{\frac{1}{b} \sum\limits_{x \in B} \left( \nabla q\left(\weight{}, \, \datapoint{}{}\right) - \nabla Q(\weight{}) \right)}^2} = \condexpect{B \sim D^b_{WR}}{\norm{\frac{1}{b} \sum\limits_{x \in B} \left( \nabla q\left(\weight{}, \, \datapoint{}{}\right) - \nabla Q(\weight{}) \right)}^2}.\]
Therefore, from above we obtain that
\begin{align}
    \expect{\norm{\frac{1}{b} \sum\limits_{x \in B} \left( \nabla q\left(\weight{}, \, \datapoint{}{}\right) - \nabla Q(\weight{}) \right)}^2} \leq \frac{1}{b} \condexpect{B \sim D^b_{WR}}{\sum\limits_{x \in B} \norm{\nabla q\left(\weight{}, \, \datapoint{}{}\right) - \nabla Q(\weight{})}^2}. \label{eqn:expect_B}
\end{align}
Note that
\begin{align*}
    \condexpect{B \sim D^b_{WR}}{\sum\limits_{x \in B} \norm{\nabla q\left(\weight{}, \, \datapoint{}{}\right) - \nabla Q(\weight{})}^2} & = \frac{{m-1 \choose b-1}}{{m \choose b}} \sum\limits_{x \in D} \norm{\nabla q\left(\weight{}, \, \datapoint{}{}\right) - \nabla Q(\weight{})}^2 \\
    & = \frac{b}{m} \sum\limits_{x \in D} \norm{\nabla q\left(\weight{}, \, \datapoint{}{}\right) - \nabla Q(\weight{})}^2.
\end{align*}
Finally, substituting above from Assumption~\ref{asp:bnd_var} we obtain that 
\begin{align*} 
    \condexpect{B \sim D^b_{WR}}{\sum\limits_{x \in B} \norm{\nabla q\left(\weight{}, \, \datapoint{}{}\right) - \nabla Q(\weight{})}^2} = b \V^2.
\end{align*}
Substitution from above in (\ref{eqn:expect_B}) concludes the proof.
\end{proof}

We now present the proof of Theorem~\ref{thm:approx_resilience}, which is re-stated below for convenience.

\begin{theorem*}
Let $\eta \geq 0$ and $b \in [m]$.
Consider Algorithm~\ref{algo} with $s > 0$, a GAR $F$ satisfying the $\eta$-approximated VN condition, and $\gamma_t = 1 / \sqrt{t}$ for all $t \in [T]$. If Assumptions~\ref{asp:bnd_norm},~\ref{asp:bnd_var}, and~\ref{asp:lip} hold true, then there exists $\alpha \in [0, \, \pi/2)$ and $\mu \in [0, \, \infty)$ such that for any $T \geq 1$,
\begin{align*}
    \min_{t \in [T]}\expect{\norm{\nabla Q(\weight{t})}^2} \leq \max\left\{ \eta^2, ~ \frac{Q(\weight{1}) - Q^*}{(1 - \sin{\alpha})}\left(\frac{1}{\sqrt{T}} \right) + \frac{\mu \sigma^2 L}{2(1 - \sin{\alpha})} \left(\frac{1 + \ln{T}}{\sqrt{T}} \right) \right\} 
\end{align*}
where $Q^*$ is the minimum value of $Q$, i.e., $Q^* = \min\limits_{\theta \in \R^d} Q(\theta)$, and $\sigma = \sqrt{\V^2 + d s^2 + C^2}$.
\end{theorem*}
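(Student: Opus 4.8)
The plan is to run a standard non-convex descent argument, using smoothness to control the one-step decrease of $Q$ and the two $(\alpha,f)$-Byzantine-resilience properties (unlocked by the $\eta$-approximated VN condition) to control the aggregate $R_t$. First I would write Assumption~\ref{asp:lip} in its descent-lemma form, $Q(\theta_{t+1}) \le Q(\theta_t) + \langle \nabla Q(\theta_t),\, \theta_{t+1}-\theta_t\rangle + \frac{L}{2}\norm{\theta_{t+1}-\theta_t}^2$, and substitute the update rule~(\ref{eqn:SGD}), $\theta_{t+1}-\theta_t = -\gamma_t R_t$, to obtain $Q(\theta_{t+1}) \le Q(\theta_t) - \gamma_t\langle \nabla Q(\theta_t),\, R_t\rangle + \frac{L\gamma_t^2}{2}\norm{R_t}^2$. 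Taking expectation conditional on $\theta_t$ isolates $\expect{R_t \mid \theta_t}$ and $\expect{\norm{R_t}^2 \mid \theta_t}$, the exact quantities that Byzantine resilience governs.

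Next I would establish the second-moment bound $\expect{\norm{G(\theta)}^2} \le \sigma^2 = \V^2 + d s^2 + C^2$. Since $\expect{G(\theta)} = \nabla Q(\theta)$ and the Gaussian perturbation in~(\ref{eqn:def_G}) is independent of the batch sampling, $\expect{\norm{G(\theta)}^2} = \norm{\nabla Q(\theta)}^2 + \expect{\norm{G(\theta)-\nabla Q(\theta)}^2}$, where the latter splits into the batch-sampling variance and the noise variance $d s^2$; bounding $\norm{\nabla Q(\theta)}^2 \le C^2$ (Assumption~\ref{asp:bnd_norm}) and the sampling variance by $\V^2$ (Lemma~\ref{lem:bnd_var}) gives the claim. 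Then, at any $\theta_t$ with $\norm{\nabla Q(\theta_t)} > \eta$, the $\eta$-approximated VN condition makes $F$ $(\alpha,f)$-Byzantine resilient for $G(\theta_t)$, so Property~1 of Definition~\ref{def::BR} yields $\langle \expect{R_t\mid\theta_t},\, \nabla Q(\theta_t)\rangle \ge (1-\sin\alpha)\norm{\nabla Q(\theta_t)}^2$, while Property~2 with $r=2$, combined with the moment bound above (each $\expect{\norm{G}^{r_i}}$ controlled by $\sigma$ via Jensen), yields $\expect{\norm{R_t}^2 \mid \theta_t} \le \mu\sigma^2$ for a constant $\mu$. Inserting both into the conditional descent inequality gives, at such steps, $\expect{Q(\theta_{t+1})\mid\theta_t} \le Q(\theta_t) - \gamma_t(1-\sin\alpha)\norm{\nabla Q(\theta_t)}^2 + \frac{L\gamma_t^2\mu\sigma^2}{2}$.

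From there I would rearrange, take total expectations, telescope over $t=1,\dots,T$, and use $Q(\theta_{T+1}) \ge Q^*$ to reach $(1-\sin\alpha)\sum_t \gamma_t\,\expect{\norm{\nabla Q(\theta_t)}^2} \le Q(\theta_1)-Q^* + \frac{L\mu\sigma^2}{2}\sum_t\gamma_t^2$. With $\gamma_t = 1/\sqrt{t}$ I would use $\sum_{t=1}^T \gamma_t \ge \sqrt{T}$ and $\sum_{t=1}^T\gamma_t^2 = \sum_{t=1}^T 1/t \le 1+\ln T$, and lower-bound the left-hand side by $(1-\sin\alpha)\sqrt{T}\,\min_t \expect{\norm{\nabla Q(\theta_t)}^2}$; dividing through reproduces exactly the second argument of the max.

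The main obstacle is that the favorable inner-product inequality holds only pointwise at parameters with $\norm{\nabla Q(\theta_t)} > \eta$, whereas the target bounds $\min_t \expect{\norm{\nabla Q(\theta_t)}^2}$. I would resolve this by a case split: if $\min_t \expect{\norm{\nabla Q(\theta_t)}^2} \le \eta^2$ the claim follows through the $\eta^2$ branch of the max; otherwise the telescoping sum must collect only the steps on which resilience is active, and the small-gradient steps must be absorbed, either by a crude Cauchy--Schwarz bound $\langle \nabla Q(\theta_t),\, \expect{R_t\mid\theta_t}\rangle \ge -\eta\sqrt{\mu\sigma^2}$ or by truncating at the first entry into $\{\norm{\nabla Q}\le\eta\}$ via a stopping time. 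Reconciling this pointwise threshold with the expectation-level statement, and verifying that the $\mu\sigma^2$ moment control on $R_t$ persists on the excluded steps, is the delicate part; the resulting $\alpha$ and $\mu$ are exactly the unspecified constants promised in the statement.
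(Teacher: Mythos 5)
Your core argument is exactly the paper's: the smoothness descent lemma applied to the update rule, the $(\alpha,f)$-BR properties unlocked by the $\eta$-approximated VN condition giving $\iprod{\nabla Q(\weight{t})}{\condexpect{t}{R_t}} \geq (1-\sin\alpha)\norm{\nabla Q(\weight{t})}^2$ and $\condexpect{t}{\norm{R_t}^2} \leq \mu\sigma^2$ with $\sigma^2 = \V^2 + ds^2 + C^2$, and then telescoping with $\gamma_t = 1/\sqrt{t}$ and $\sum_{t=1}^T 1/t \leq 1+\ln T$. The one place you diverge is the case split, and there the paper is simpler than what you propose: it partitions $[T]$ into $S_T = \left\{t \suchthat \norm{\condexpect{t}{G(\weight{t})}} \leq \eta\right\}$ and its complement and argues by dichotomy --- if $S_T \neq \emptyset$, some step $t^*$ already satisfies $\norm{\nabla Q(\weight{t^*})} \leq \eta$ and the $\eta^2$ branch of the max covers the claim; if $S_T = \emptyset$, the VN inequality (hence BR) holds at \emph{every} step $t \in [T]$, so the telescoping sum contains no excluded steps and no absorption device is needed. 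This sidesteps both of your fallbacks; and indeed your Cauchy--Schwarz absorption cannot work, for exactly the reason you flag yourself: on steps where $\norm{\nabla Q(\weight{t})} \leq \eta$ the GAR is not guaranteed to be Byzantine resilient, so neither $\condexpect{t}{R_t}$ nor $\condexpect{t}{\norm{R_t}^2}$ is controlled at all (a Byzantine gradient of arbitrary norm may be selected). Be aware, however, that the paper's dichotomy is itself loose in precisely the direction you worried about: $S_T$ is a \emph{random} set depending on the trajectory, while the theorem bounds $\min_{t \in [T]} \expect{\norm{\nabla Q(\weight{t})}^2}$; the paper's proof silently treats ``$S_T = \emptyset$ or not'' as a deterministic alternative and converts a per-realization bound into a bound on expectations. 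So your instinct that a stopping-time or per-realization argument is what full rigor requires is sound --- the paper does not supply one, it asserts the dichotomy and moves on.
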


\begin{proof}
Recall from (\ref{eqn:SGD}) in Algorithm~\ref{algo} that, for all $t$,
\begin{align*}
    \weight{t+1} = \weight{t} - \gamma_t \, R_t
\end{align*}
where $R_t = F\left(\gradient{1}{t}, \ldots, \, \gradient{n}{t}\right)$.
Thus, under Assumption~\ref{asp:lip}, we obtain that for all $t$,
\begin{align}
    Q(\weight{t+1}) \leq Q(\weight{t}) - \gamma_t \iprod{\nabla Q(\weight{t})}{R_t} + \frac{\gamma^2_t L }{2} \norm{R_t}^2. \label{eqn:lip_bnd_1}
\end{align}
We denote by $\condexpect{t}{\cdot}$ the conditional expectation given the history $\mathcal{P}_t = \left \{ \weight{1}, \ldots, \, \weight{t}; R_1, \ldots, \, R_{t-1}\right \}$. By taking the conditional expectation $\condexpect{t}{\cdot}$ on both sides in (\ref{eqn:lip_bnd_1}) we obtain that
\begin{align}
    \condexpect{t}{Q(\weight{t+1})} \leq Q(\weight{t}) - \gamma_t \iprod{\nabla Q(\weight{t})}{\condexpect{t}{R_t}} + \frac{\gamma^2_t L }{2} \,  \condexpect{t}{\norm{R_t}^2}. \label{eqn:lip_bnd_2}
\end{align}
Recall from (\ref{eqn:def_G}) that, for all $t$, the random vector characterizing the gradient send by the honest workers is 
\begin{equation*}
    G(\weight{t}):= \frac{1}{b} \sum\limits_{x \in B_t} \nabla q\left(\weight{t}, \, \datapoint{}{}\right) + y_t
\end{equation*}
where $B_t$ is a set of $b$ points randomly sampled without replacement from $D$ and $y_t \sim \mathcal{N}(0,s^2 I_d)$. By Definition~\ref{VN_Condition_Approx} of the $\eta$-approximated VN condition, when $\norm{\condexpect{t}{G(\weight{t})}} > \eta$,
\begin{align}
    \kappa_F(n,f)^2 \condexpect{t}{\norm{G(\weight{t})-\condexpect{t}{G(\weight{t})}}^2} < \norm{\condexpect{t}{G(\weight{t})}}^2. \label{eqn:proof_vn}
\end{align}

Now, consider an arbitrary integer $T \geq 1$. We can partition set $[T] = \{1, \ldots, \, T\}$ into two sets $S_T$ and $\overline{S}_T$ such that
\begin{align}
    S_T = \left\{t \in [T] ~ \vline ~ \norm{\condexpect{t}{G(\weight{t})}} \leq \eta\right\}, \quad \text{ and } ~ \overline{S}_T = [T] \setminus S_T. \label{eqn:def_st}
\end{align}
First, we consider the case when $S_T \neq \emptyset$. In this case, $\exists t^* \in [T]$ such that $\norm{\condexpect{t^*}{G(\weight{t^*})}} \leq \eta$. As $\expect{G(\weight{})} = \nabla Q(\weight{})$ for any given $\weight{}$, this implies that $\norm{\nabla Q (\weight{t^*})} \leq \eta$. Thus,
\begin{align*}
    \min_{t \in [T]} \norm{\nabla Q(\weight{t})} \leq \eta.
\end{align*}
Thus, the theorem is trivially true in this case. Next, we consider the case when $S_T = \emptyset$, i.e., $\overline{S}_T = [T]$. Thus, in this case,
\begin{align*}
    \norm{\condexpect{t}{G(\weight{t})}} > \eta, \quad \forall t \in [T]. 
\end{align*}
This implies that (\ref{eqn:proof_vn}) holds true for all $t \in [T]$. Thus, from prior results in~\cite{krum, brute_bulyan}, we obtain that GAR $F$ is $(\alpha, \, f)$-Byzantine resilient with respect to $G(\weight{t})$ for each $t \in [T]$. Therefore, by Definition~\ref{def::BR}, there exists $\alpha \in [0, \, \pi/2)$ and $\mu < \infty$ such that (recall that $\expect{G(\weight{})} = \nabla Q(\weight{})$ for any given $\weight{}$)
\begin{align*}
\iprod{\nabla Q(\weight{t})}{\condexpect{t}{R_t}} \geq (1 - \sin{\alpha}) \norm{\nabla Q(\weight{t})}^2, \text{ and } \condexpect{t}{\norm{R_t}^2} \leq \mu \condexpect{t}{\norm{G(\weight{t})}^2}, ~ \forall t \in [T].
\end{align*}
Substituting from above in (\ref{eqn:lip_bnd_2}) we obtain that, for all $t \in [T]$,
\begin{align}
    \condexpect{t}{Q(\weight{t+1})} \leq Q(\weight{t}) - \gamma_t (1 - \sin{\alpha}) \norm{\nabla Q(\weight{t})}^2 + \frac{\gamma^2_t L \mu }{2} \, \condexpect{t}{\norm{G(\weight{t})}^2}. \label{eqn:lip_bnd_3}
\end{align}
Under Assumptions~\ref{asp:bnd_norm} and~\ref{asp:bnd_var}, from Lemma~\ref{lem:bnd_var} we obtain that, for all $t \in [T]$,
\begin{align*}
    \condexpect{t}{\norm{G(\weight{t})}^2} \leq \condexpect{t}{\norm{G(\weight{t}) - \condexpect{t}{G(\weight{t})}}^2} + \norm{\nabla Q(\weight{t})}^2 \leq \left(\V^2 + d s^2\right) + C^2.
\end{align*}
Recall that $\sigma^2 : = \V^2 + d s^2 + C^2$. Substituting from above in (\ref{eqn:lip_bnd_3}) we obtain that
\begin{align*}
    \condexpect{t}{Q(\weight{t+1})} \leq Q(\weight{t}) - \gamma_t (1 - \sin{\alpha}) \norm{\nabla Q(\weight{t})}^2 + \frac{\sigma^2 L \mu }{2} \, \gamma^2_t, \quad \forall t \in [T].
\end{align*}
As $\expect{\cdot} = \condexpect{1}{ \ldots \condexpect{t}{\cdot} \ldots}$, from above we obtain that
\begin{align*}
    \expect{Q(\weight{t+1})} \leq \expect{Q(\weight{t})} - \gamma_t (1 - \sin{\alpha}) \expect{\norm{\nabla Q(\weight{t})}^2} + \frac{\sigma^2 L \mu }{2} \, \gamma^2_t, \quad \forall t \in [T].
\end{align*}
Thus, by taking summation on both sides over all $t \in [T]$, we obtain that
\begin{align*}
    \expect{Q(\weight{T+1})} \leq \expect{Q(\weight{1})} - (1 - \sin{\alpha}) \sum_{t = 1}^T \gamma_t \expect{\norm{\nabla Q(\weight{t})}^2} + \frac{\sigma^2 L \mu }{2} \, \sum_{t = 1}^T \gamma^2_t.
\end{align*}
As $\weight{1}$ is a priori fixed, $\expect{Q(\weight{1})} = Q(\weight{1})$. Upon subtracting $Q^*$ on both sides, and noting that $\expect{Q(\weight{})} \geq Q^*$ for all $\weight{}$, we obtain that
\begin{align*}
    0 \leq \expect{Q(\weight{T+1})} - Q^*\leq Q(\weight{1}) - Q^* - (1 - \sin{\alpha}) \sum_{t = 1}^T \gamma_t \expect{\norm{\nabla Q(\weight{t})}^2} + \frac{\sigma^2 L \mu }{2} \, \sum_{t = 1}^T \gamma^2_t.
\end{align*}
By re-arranging, 
\begin{align*}
    (1 - \sin{\alpha}) \sum_{t = 1}^T \gamma_t \expect{\norm{\nabla Q(\weight{t})}^2} \leq \expect{Q(\weight{1})} - Q^*  + \frac{\sigma^2 L \mu }{2} \, \sum_{t = 1}^T \gamma^2_t.
\end{align*}
Upon substituting $\gamma_t := 1/\sqrt{t}$, we obtain that
\begin{align*}
    (1 - \sin{\alpha}) \frac{\sum_{t = 1}^T \expect{\norm{\nabla Q(\weight{t})}^2}}{\sqrt{T}} \leq \expect{Q(\weight{1})} - Q^*  + \frac{\sigma^2 L \mu }{2} \, \sum_{t = 1}^T \frac{1}{t}.
\end{align*}
Note that $\sum_{t = 1}^T \frac{1}{t} \leq \ln{T} + 1$. Thus, by multiplying both sides by $1/\sqrt{T}$, we obtain that
\begin{align*}
    (1 - \sin{\alpha}) \, \frac{\sum_{t = 1}^T \expect{\norm{\nabla Q(\weight{t})}^2}}{T} \leq \frac{Q(\weight{1}) - Q^*}{\sqrt{T}}  + \frac{\sigma^2 L \mu }{2} \, \left( \frac{\ln{T} + 1}{\sqrt{T}} \right).
\end{align*}
Hence, as $\min_{t \in [T]} \expect{\norm{\nabla Q(\weight{t})}^2} \leq \left(\frac{1}{T}\right)\sum_{t = 1}^T \expect{\norm{\nabla Q(\weight{t})}^2}$,
\begin{align*}
    (1 - \sin{\alpha}) \, \min_{t \in [T]} \expect{\norm{\nabla Q(\weight{t})}^2} \leq \frac{Q(\weight{1}) - Q^*}{\sqrt{T}}  + \frac{\sigma^2 L \mu }{2} \, \left( \frac{\ln{T} + 1}{\sqrt{T}} \right).
\end{align*}
Hence, the proof.
\end{proof}

\subsection{Proof of Proposition~\ref{prop:relax}}
\begin{proposition*} 
Let $b \in [m]$, $(\epsilon, \delta) \in (0, \,1)^2$.
Consider Algorithm~\ref{algo} with privacy budget $(\epsilon,\delta)$ and GAR $F$ with multiplicative constant $\kappa_F(n,f) >0$. Then the following assertions hold true. 
\begin{enumerate}
\item  Under Assumptions~\ref{asp:bnd_norm} and~\ref{asp:lip}, the $\eta$-approximated VN condition holds true only if
\[ 
 \eta^2 \geq 4 \kappa_F(n,f)^2 C^2 d \ln\left(\frac{1.25 b}{m \delta}\right) \frac{1}{b m (e^{\epsilon} -1)}. 
\]
  \item Additionally under Assumption~\ref{asp:bnd_var}, if
\[ 
 \eta^2 \geq \kappa_F(n,f)^2 \left( 8 C^2 d \ln\left(\frac{1.25 b}{m \delta}\right) \left(\frac{1}{m(e^{\epsilon} -1)} + \frac{1}{b} \right)^2 + \V^2 \right) 
\]
then $F$ satisfies the $\eta$-approximated VN condition. 
\end{enumerate}
\end{proposition*}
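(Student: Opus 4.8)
The plan is to reduce both assertions to a single two-sided estimate on the variance of $G(\theta)$ and then match the closed form of $s$ from Theorem~\ref{th:privacyguarantee} against the stated thresholds via two elementary logarithmic inequalities. First I would record the variance decomposition: since the Gaussian noise is independent of the batch sampling and $\expect{G(\theta)} = \nabla Q(\theta)$,
\[
\expect{\norm{G(\theta) - \nabla Q(\theta)}^2} = \condexpect{B \sim D^b_{WR}}{\norm{\tfrac{1}{b}\sum_{x \in B}\nabla q(\theta, x) - \nabla Q(\theta)}^2} + d s^2,
\]
which, combined with Lemma~\ref{lem:bnd_var}, yields $d s^2 \leq \expect{\norm{G(\theta) - \nabla Q(\theta)}^2} \leq \V^2 + d s^2$. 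Substituting $s^2 = \tfrac{8 C^2 \ln(1.25 b / m\delta)}{b^2 \ln^2((e^\epsilon - 1)\tfrac{m}{b} + 1)}$ then turns both sides into explicit functions of the hyperparameters.

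For the sufficiency claim (assertion 2), the $\eta$-approximated VN condition asks only that $\kappa_F(n,f)^2 \expect{\norm{G(\theta) - \nabla Q(\theta)}^2} < \norm{\nabla Q(\theta)}^2$ at every $\theta$ with $\norm{\nabla Q(\theta)} > \eta$. Using the upper bound $\V^2 + d s^2$ and the fact that $\norm{\nabla Q(\theta)}^2 > \eta^2$ on that set, it suffices to guarantee $\kappa_F(n,f)^2(\V^2 + d s^2) \leq \eta^2$. The remaining task is analytic: show $d s^2 \leq 8 C^2 d \ln(1.25 b / m\delta)\big(\tfrac{1}{m(e^\epsilon-1)} + \tfrac{1}{b}\big)^2$, which after cancelling common factors and taking square roots is equivalent to $\tfrac{u}{1+u} \leq \ln(1+u)$ with $u := (e^\epsilon-1)\tfrac{m}{b}$ — a standard inequality. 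This makes the hypothesis of assertion 2 at least as strong as $\kappa_F(n,f)^2(\V^2 + ds^2) \leq \eta^2$, delivering the condition.

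For the necessity claim (assertion 1), I would argue by contraposition, mirroring the construction in Proposition~\ref{Prop:VNConditionBroken}. From the lower bound $\expect{\norm{G(\theta)-\nabla Q(\theta)}^2} \geq ds^2$, the $\eta$-approximated VN condition forces $\norm{\nabla Q(\theta)}^2 > \kappa_F(n,f)^2 d s^2$ at every point where it applies. If $\eta < \kappa_F(n,f)\sqrt{d}\,s$, then by smoothness (Assumption~\ref{asp:lip}) and the existence of both a critical and a non-critical point, the continuous map $\theta \mapsto \norm{\nabla Q(\theta)}$ takes some value in the nonempty interval $(\eta, \kappa_F(n,f)\sqrt{d}\,s]$; at such a $\theta'$ we would have $\norm{\nabla Q(\theta')} > \eta$ yet $\norm{\nabla Q(\theta')}^2 \leq \kappa_F(n,f)^2 d s^2$, contradicting the condition. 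Hence $\eta^2 \geq \kappa_F(n,f)^2 d s^2$, which I would finally loosen to the stated closed form by showing $\kappa_F(n,f)^2 d s^2 \geq \tfrac{4\kappa_F(n,f)^2 C^2 d \ln(1.25b/m\delta)}{b m (e^\epsilon-1)}$; this reduces to $\ln^2(1+u) \leq 2u$, provable since $\ln(1+u)/(1+u) \leq 1/e$ makes $2u - \ln^2(1+u)$ increasing from $0$.

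The main obstacle I anticipate is the necessity direction: the clean contradiction requires producing a $\theta'$ whose gradient norm lands strictly between $\eta$ and $\kappa_F(n,f)\sqrt{d}\,s$, so the argument hinges on an intermediate-value/continuity statement for $\norm{\nabla Q}$ near a critical point rather than on the crude bound $\norm{\nabla Q(\theta')} \leq \kappa_F(n,f)\sqrt{d}\,s$ alone; the vacuous case, where no $\theta$ satisfies $\norm{\nabla Q(\theta)} > \eta$, must also be separated out. The two logarithmic inequalities $\ln^2(1+u)\leq 2u$ and $u/(1+u) \leq \ln(1+u)$ are the technical heart tying $s$ to the stated bounds, but each is routine once isolated.
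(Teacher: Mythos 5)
Your proposal is correct and takes essentially the same route as the paper's own proof: the same independence/variance decomposition giving the $d s^2$ noise floor, the same use of Lemma~\ref{lem:bnd_var} together with $\tfrac{u}{1+u}\le\ln(1+u)$ for the sufficiency direction, and the same critical-point/intermediate-value argument reducing necessity to $\eta^2\ge\kappa_F(n,f)^2 d s^2$ before loosening with a logarithmic inequality. Two points of comparison worth recording: your inequality $\ln^2(1+u)\le 2u$ is exactly what the constant $4$ in assertion 1 requires (the paper invokes the weaker $\ln(1+x)\le 2\sqrt{x}$, which only delivers a constant $2$, so your version repairs a small slip), and the ``vacuous case'' you flag --- no $\theta$ with $\norm{\nabla Q(\theta)}>\eta$ --- is precisely the step the paper elides when it asserts that continuity plus the existence of a critical point make the range of $\norm{\nabla Q(\cdot)}^2$ contain all of $\left[0,\kappa_F(n,f)^2 d s^2\right]$; under Assumption~\ref{asp:bnd_norm} that range lies in $[0,C^2]$, so both your argument and the paper's tacitly require that gradient norms exceed $\eta$ somewhere.
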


\begin{proof}
Let us consider an arbitrary GAR $F$ with multiplicative constant $\kappa_F(n,f)>0$. We denote the set of points that do not satisfy the constraint on the gradient of $Q$ by $\Theta_\eta:=\{\theta \in \mathbb{R}^d \mid \norm{\nabla Q(\theta)} \leq \eta\}$. While considering Algorithm~\ref{algo}, the random variable that characterizes the gradients sent by the honest workers is defined as follows:
\begin{equation}
     G(\theta):= \frac{1}{b} \sum\limits_{ x \in B} \nabla q\left(\weight{}, \, \datapoint{}{}\right) + y,\ \forall \theta \in \mathbb{R}^d \nonumber
\end{equation}
where $B$ is a set of $b$ points randomly sampled without replacement from $D$ (denoted $B \sim D^b_{WR}$) and $y \sim \mathcal{N}(0,s^2 I_d)$. As in the proof of Proposition~\ref{Prop:VNConditionBroken}, we note that for all $\theta \in \mathbb{R}^d$, $G(\theta)$ is an unbiased estimate of $\nabla Q(\theta)$, hence we also have that $\Theta_\eta:=\{\theta \in \mathbb{R}^d \mid \norm{\expect{G(\theta)}} \leq \eta\}$. Furthermore, the injected noise is independent of the stochasticity of gradients. Hence for all $\theta \in \mathbb{R}^d$,
\begin{align} 
    \expect{\norm{G(\theta)-\expect{G(\theta)}}^2}  &= \condexpect{B \sim D^b_{WR}}{\norm{ \frac{1}{b} \sum_{x \in B}\compgrad{\weight{}}{x} - \nabla Q\left(\weight{}\right)}^2} + d s^2 \label{eq:noisedecompositionfirst}
\end{align}

\textbf{1.} We now demonstrate a necessary condition for $F$ to satisfy the $\eta$-approximated VN condition under Assumptions~\ref{asp:bnd_norm} and~\ref{asp:lip}.
To do so, we first show reductio ad absurdum that the $\eta$-approximated VN condition holds true only if
\begin{align}
    \eta^2 \geq  \kappa_F(n,f)^2 d \left( \frac{8 C^2 \ln\left(\frac{1.25 b}{m \delta}\right)}{b^2 \ln\left((e^{\epsilon} -1)\frac{m}{b} +1 \right)^2} \right). \nonumber
\end{align}
Let us assume that there exists $\eta >0$ such that 
\begin{align}\label{eq:hypothesis}
    \eta^2 < \kappa_F(n,f)^2 d \left( \frac{8 C^2 \ln\left(\frac{1.25 b}{m \delta}\right)}{b^2 \ln\left((e^{\epsilon} -1)\frac{m}{b} +1 \right)^2} \right),
\end{align}
and such that the $\eta$-approximated VN condition holds. That is, for all $\theta \in \mathbb{R}^d \backslash \Theta_\eta$,
\begin{equation}\label{VNconditionHolds}
\kappa_F(n,f)^2 \expect{\norm{G(\theta)-\expect{G(\theta)}}^2} < \norm{\expect{G(\theta)}}^2. \end{equation}

Let us first note that from (\ref{eq:noisedecompositionfirst}), replacing $s^2$ according to the privacy budget from Theorem~\ref{th:privacyguarantee}, we have for any $\theta \in \mathbb{R}^d$
\begin{align}
     d \left( \frac{8C^2 \ln\left(\frac{1.25 b}{m \delta}\right)}{b^2 \ln\left(  (e^{\epsilon} -1)\frac{m}{b} +1 \right)^2} \right) & \leq \expect{\norm{G(\theta)-\expect{G(\theta)}}^2}  . \label{eq:noisedecompositionbis}
\end{align}
Hence, combining (\ref{VNconditionHolds}) and (\ref{eq:noisedecompositionbis}), we get that for all $\theta \in \mathbb{R}^d \backslash \Theta_\eta$, 
\begin{equation}\label{eq:inequality1}
\kappa_F(n,f)^2 d \left(\frac{8 C^2 \ln\left(\frac{1.25 b}{m \delta}\right)}{b^2 \ln\left(  (e^{\epsilon} -1)\frac{m}{b} +1 \right)^2} \right) < \norm{\expect{G(\theta)}}^2. 
\end{equation}
Recall that $Q$ admits a local minima, hence there exists $\theta^*$ such that $\norm{\nabla Q(\theta^*)} = 0$. Furthermore, under Assumption~\ref{asp:lip}, we know that $\norm{\nabla Q(.)}^2$ is a continuous on $\mathbb{R}^d$; hence its range on $\mathbb{R}^d$ includes the interval $\left[0, ~ \kappa_F(n,f)^2 d \left(\frac{8 C^2 \ln\left(\frac{1.25 b}{m \delta}\right)}{b^2 \ln\left(  (e^{\epsilon} -1)\frac{m}{b} +1 \right)^2} \right)\right]$. In particular, there exist $\theta' \in \mathbb{R}^d$ such that
\begin{equation}\label{eq:contradiction2}
    \norm{\nabla Q(\theta')}^2 = \norm{\expect{G(\theta')}}^2 = \kappa_F(n,f)^2 d \left( \frac{8 C^2 \ln\left(\frac{1.25 b}{m \delta}\right)}{b^2 \ln\left(  (e^{\epsilon} -1)\frac{m}{b} +1 \right)^2} \right).
\end{equation}
On the one hand, assuming that $\theta' \in \Theta_\eta$, leads to a contradiction with (\ref{eq:hypothesis}). On the other hand, assuming that $\theta' \in \mathbb{R}^d \backslash \Theta_\eta$, leads to another contradiction with (\ref{eq:inequality1}); hence also with (\ref{eq:hypothesis}). Finally, we obtain reductio ad absurdum that the $\eta$-approximated VN condition holds only if
\begin{align}
\label{eq:firstresult}
    \eta^2 \geq \kappa_F(n,f)^2 d \left(\frac{8 C^2 \ln\left(\frac{1.25 b}{m \delta}\right)}{b^2 \ln\left(  (e^{\epsilon} -1)\frac{m}{b} +1 \right)^2} \right).
\end{align} 

If we further note that for any $x \in \mathbb{R}_{\geq 0}$, $\ln(x +1) \leq 2 \sqrt{x}$ we have
\begin{align}
\label{eq:resultforcorollary}
    \kappa_F(n,f)^2 d \left(\frac{8 C^2 \ln\left(\frac{1.25 b}{m \delta}\right)}{b^2 \ln\left((e^{\epsilon} -1)\frac{m}{b} +1 \right)^2} \right) \geq 4 \kappa_F(n,f)^2 d \left( \frac{ C^2 \ln\left(\frac{1.25 b}{m \delta}\right) }{b^2 (e^{\epsilon} -1)\frac{m}{b}} \right).
\end{align} 
Finally, combining (\ref{eq:firstresult}) and (\ref{eq:resultforcorollary}) proves the necessary condition. \\

\textbf{2.} We now show the sufficient condition for $F$ to satisfy the $\eta$-approximated VN condition by further supposing that Assumption~\ref{asp:bnd_var} holds true. Let us consider
\begin{equation}
\label{eq:sufficient1}
     \eta^2 \geq \kappa_F(n,f)^2 \left( 8 C^2 d \ln\left(\frac{1.25 b}{m \delta}\right) \left(\frac{1}{m(e^{\epsilon} -1)} + \frac{1}{b} \right)^2 + \V^2 \right) 
\end{equation}
Note that for any $x \in \mathbb{R}_{>0}$, $ 1- \frac{1}{x} \leq \ln(x)$. Accordingly, we have 
\begin{align}
\label{eq:simplification}
    \frac{1}{m(e^{\epsilon} -1)} + \frac{1}{b}= \frac{1}{b} \left( 1 - \frac{1}{\frac{m}{b}(e^{\epsilon} -1) +1} \right)^{-1}  \geq   \frac{1}{b} \left(\ln\left(\frac{m}{b}(e^{\epsilon} -1) + 1\right) \right)^{-1}
\end{align}
By combining (\ref{eq:sufficient1}) and (\ref{eq:simplification}), we get
\begin{equation}
\label{eq:lowerboundeta}
 \eta^2 \geq \kappa_F(n,f)^2 \left( d \left(\frac{8 C^2 \ln\left(\frac{1.25 b}{m \delta}\right)}{b^2 \left(\ln\left((e^{\epsilon} -1)\frac{m}{b} +1 \right)\right)^2}\right)+ \V^2 \right).
\end{equation}
Furthermore, by~Lemma~\ref{lem:bnd_var}, under Assumption~\ref{asp:bnd_var}, we have that
\begin{align*}
    \expect{\norm{G(\theta)-\expect{G(\theta)}}^2}  \leq \V^2 + 
     d \left( \frac{8 C^2 \ln\left(\frac{1.25 b}{m \delta}\right)}{b^2 \left(\ln\left((e^{\epsilon} -1)\frac{m}{b} +1 \right)\right)^2} \right).
\end{align*} 
Thus,
% is equivalent to the following inequality 
\begin{align*}
    \kappa_F(n,f)^2 \expect{\norm{G(\theta)-\expect{G(\theta)}}^2}  \leq \kappa_F(n,f)^2 \left(\V^2 + 
     d \left( \frac{8 C^2 \ln\left(\frac{1.25 b}{m \delta}\right)}{b^2 \left(\ln\left((e^{\epsilon} -1)\frac{m}{b} +1 \right)\right)^2} \right)\right).
\end{align*}
Finally, by using (\ref{eq:lowerboundeta}), for any $\theta \in \mathbb{R^d} \backslash \Theta_{\eta}$ we have 
\begin{align*}
     \kappa_F(n,f)^2 \expect{\norm{G(\theta)-\expect{G(\theta)}}^2} \leq \eta^2 < \norm{\expect{G(\theta)}}^2.
\end{align*}
The above concludes the proof.

\end{proof}

\subsection{Proof of Corollary~\ref{corr:approx_resilience_tradeoff}}
\label{app:approx_resilience_tradeoff}

\begin{corollary*}
Let $b \in [m]$, $(\epsilon,\delta) \in (0, \,1)^2$.
Consider Algorithm~\ref{algo} with privacy budget $(\epsilon,\delta)$ and GAR $F$, and $\gamma_t = 1 / \sqrt{t}$ for all $t \in [T]$. If Assumptions~\ref{asp:bnd_norm},~\ref{asp:bnd_var}, and~\ref{asp:lip} are satisfied, then for any $T \geq 1$,
\footnotesize
\begin{align*}
    \min_{t \in [T]}\expect{\norm{\nabla Q(\weight{t})}^2} \leq \max\left\{ \kappa_F(n,f)^2 \left( 8 C^2 d \ln\left(\frac{1.25 b}{m \delta}\right) \left(\frac{1}{m(e^{\epsilon} -1)} + \frac{1}{b} \right)^2 + \V^2 \right), ~ O\left(\frac{\ln{T}}{\sqrt{T}} \right) \right\}. 
    % \label{eqn:conv_1}
\end{align*}
\normalsize
\end{corollary*}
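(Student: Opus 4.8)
The plan is to obtain this statement as a literal corollary, by chaining the sufficient condition of Proposition~\ref{prop:relax} with the convergence bound of Theorem~\ref{thm:approx_resilience} under a single judicious choice of $\eta$. Concretely, I would set
\[
\eta^2 := \kappa_F(n,f)^2 \left( 8 C^2 d \ln\left(\frac{1.25 b}{m \delta}\right) \left(\frac{1}{m(e^{\epsilon} -1)} + \frac{1}{b} \right)^2 + \V^2 \right),
\]
which is exactly the threshold appearing in part~2 of Proposition~\ref{prop:relax}. The whole proof then amounts to feeding this $\eta$ through the two earlier results in sequence.

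First I would invoke part~2 of Proposition~\ref{prop:relax}. Since Assumptions~\ref{asp:bnd_norm},~\ref{asp:bnd_var}, and~\ref{asp:lip} hold and the above $\eta^2$ meets the stated lower bound (with equality), the GAR $F$ satisfies the $\eta$-approximated VN condition for this particular value of $\eta$. With the $\eta$-approximated VN condition in hand, I would apply Theorem~\ref{thm:approx_resilience} directly: it produces constants $\alpha \in [0,\pi/2)$ and $\mu \in [0,\infty)$ such that
\[
\min_{t \in [T]}\expect{\norm{\nabla Q(\weight{t})}^2} \leq \max\left\{ \eta^2, ~ \frac{Q(\weight{1}) - Q^*}{(1 - \sin{\alpha})}\left(\frac{1}{\sqrt{T}} \right) + \frac{\mu \sigma^2 L}{2(1 - \sin{\alpha})} \left(\frac{1 + \ln{T}}{\sqrt{T}} \right) \right\},
\]
with $\sigma = \sqrt{\V^2 + d s^2 + C^2}$.

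Finally I would observe that, in the second argument of the maximum, every quantity other than $T$ is a fixed constant of the run: $\alpha$, $\mu$, $\sigma$, $L$, $Q(\weight{1})$, and $Q^*$ are all independent of the horizon. Hence the first summand is $O(1/\sqrt{T})$ and the second is $O(\ln T / \sqrt{T})$, so the entire bracket collapses to $O(\ln T / \sqrt{T})$. Substituting the explicit value of $\eta^2$ into the first argument of the maximum then yields precisely the claimed inequality.

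As for difficulty, this is a corollary in the genuine sense: a mechanical composition of two results already proved. The only step deserving a sentence of care is the $O(\cdot)$ absorption at the end, namely confirming that none of the hidden constants secretly depends on $T$ — in particular that the Byzantine-resilience parameters $\alpha$ and $\mu$ are fixed across the learning procedure (as noted in the remark following Theorem~\ref{thm:approx_resilience}). Beyond this bookkeeping, no genuine obstacle arises.
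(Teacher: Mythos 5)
Your proposal is correct and matches the paper's own proof essentially line for line: both instantiate $\eta^2$ at the threshold from part~2 of Proposition~\ref{prop:relax}, feed that $\eta$ into Theorem~\ref{thm:approx_resilience}, and absorb the constant-coefficient terms into $O(\ln T/\sqrt{T})$. Your closing remark that $\alpha$ and $\mu$ must not depend on $T$ is exactly the right point of care, and it is the same one the paper's remark after Theorem~\ref{thm:approx_resilience} addresses.
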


\begin{proof}
Under the stated assumptions, we note from Proposition~\ref{prop:relax} that GAR $F$ satisfies the $\eta$-approximated VN condition when
\[ 
 \eta \geq \kappa_F(n,f)^2 \left( 8 C^2 d \ln\left(\frac{1.25 b}{m \delta}\right) \left(\frac{1}{m(e^{\epsilon} -1)} + \frac{1}{b} \right)^2 + \V^2 \right).
\]
Thus, Theorem~\ref{thm:approx_resilience} implies that there exist $\alpha \in [0, \, \pi/2)$ and $\mu \in [0, \, \infty)$ such that for any $T \geq 1$,
\footnotesize
\begin{align*}
    &\min_{t \in [T]}\expect{\norm{\nabla Q(\weight{t})}^2} \leq \\
    &\max\left\{ \kappa_F(n,f)^2 \left( 8 C^2 d \ln\left(\frac{1.25 b}{m \delta}\right) \left(\frac{1}{m(e^{\epsilon} -1)} + \frac{1}{b} \right)^2 + \V^2 \right) , \, \frac{Q(\weight{1}) - Q^*}{(1 - \sin{\alpha})}\left(\frac{1}{\sqrt{T}} \right) + \frac{\mu \sigma^2 L}{2(1 - \sin{\alpha})} \left(\frac{1 + \ln{T}}{\sqrt{T}} \right) \right\}.
\end{align*}
\normalsize
Finally, note that 
\[\frac{Q(\weight{1}) - Q^*}{(1 - \sin{\alpha})}\left(\frac{1}{\sqrt{T}} \right) + \frac{\mu \sigma^2 L}{2(1 - \sin{\alpha})} \left(\frac{1 + \ln{T}}{\sqrt{T}} \right) \in O\left(\frac{\ln{T}}{\sqrt{T}} \right).\]
Hence, the proof.
\end{proof}
\newpage
%\input{6_3_Appendix_Block_Coordinate_Descent}
%\newpage
\section{Additional Information on the Experimental Setup}\label{app:expSetup}
\subsection{Momentum SGD}
While the wish for simplicity of presentation has led us to focus on SGD in the paper, our use of momentum in the experiments is motivated by the following two main observations.
\begin{enumerate}[leftmargin=0.5 cm]
    \item \citet{distributed-momentum} recently demonstrated empirically that momentum at the workers can improve the resilience of the system to Byzantine workers.
    \item Momentum has been demonstrated to improve the convergence rate of SGD by reducing the variance of stochastic gradients. In the context of Algorithm~\ref{algo}, momentum can be shown to reduce the variance of the noise injected by the workers, while preserving the privacy budget.
\end{enumerate}

\subsection{Privacy Budget and Number of Training Steps}
The values of per step and per worker privacy budget $\epsilon \in \{0.2, 0.1, 0.05\}$ and $\delta = 10^{-5}$ that we consider in Section~\ref{expSetup} yield, respectively, an overall privacy budget of $\overline{\epsilon} \in \{7.58, 4.77, 3.46\}$ and $\overline{\delta} = 1.2\times10^{-4}$ for every batch size considered, after running the algorithm for $T$ steps. Since $(\overline{\epsilon}, \overline{\delta})$ directly depends on the batch size as well as the total number of learning steps $T$, we adapt the value of $T$ for every run in order to obtain the same $(\overline{\epsilon}, \overline{\delta})$ in all experiments (regardless of the batch size used). The maximum value of $T$ considered is 300 steps for the smallest batch size $b = 25$. Larger batch sizes have $T < 300$ \footnote{
Note that to avoid technical difficulties, we actually run 300 learning steps for all experiments. However, we report the maximum cross-accuracies at the end of $T \leq 300$ steps, where $T$ is defined as above.}. Finally, given a fixed batch size and number of steps, we evaluate the overall privacy budgets using the PyTorch package~\citet{Opacus} that is based on the~\textit{moments accountant} algorithm~\citep{abadi2016deep}, as well as the advanced composition theorem~\citep{AdvancedComposition}.

\subsection{Studied Attacks}
In the experiments of this paper, we use two state-of-the-art attacks that we refer to as \textit{little} and \textit{empire}. Both attacks rely on the same core idea. Let $\zeta$ be fixed a non-negative real number and let $a_t$ be the attack vector at time step $t$. At every time step $t$, all Byzantine workers send $\overline{g_t} + \zeta a_t$ to the server, where $\overline{g_t}$ is an approximation of the real gradient $G(\theta_t)$ at step $t$. The specific details of each attack are mentioned below.

\paragraph{Little is Enough~\citep{little}.}
In this attack, $a_t = - \sigma_t$, where $\sigma_t$ is the opposite vector of the coordinate-wise standard deviation of $G(\theta_t)$. In our experiments, we set $\zeta = 1$ for \textit{little}.

\paragraph{Fall of Empires~\citep{empire}.}
In this attack, $a_t = - \overline{g_t}$. All Byzantine workers thus send $(1 - \zeta) \overline{g_t}$ at step $t$. In our experiments, we set $\zeta = 1.1$ for \textit{empire}, corresponding to $\epsilon = 0.1$ in the notation of the original paper.
\section{Additional Experimental Results}\label{app:expResults}
\subsection{Impact of Batch Size in Non-Private Setting}
\begin{figure}[ht]
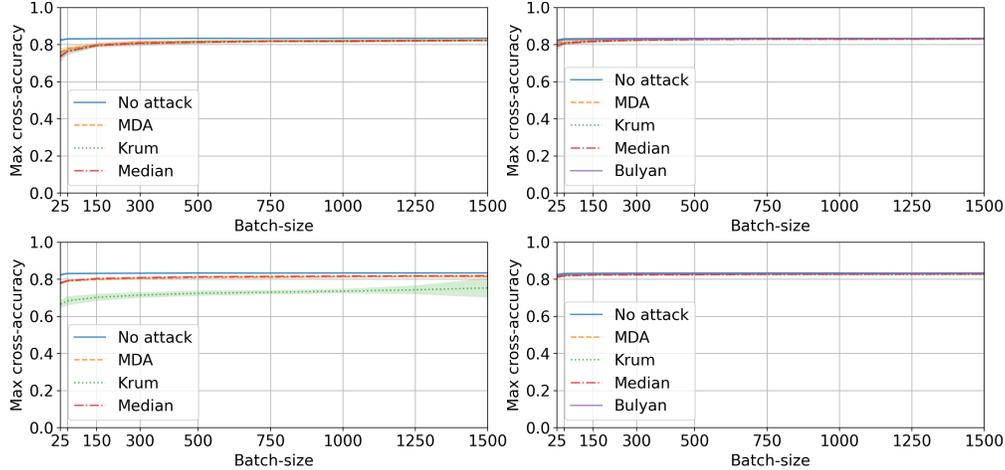

    \centering
    \includewideplot{0.4\linewidth}{fashionmnist-little-f_6-m_0.9-at_worker-b_variable-e_none}%
    \includewideplot{0.4\linewidth}{fashionmnist-little-f_3-m_0.9-at_worker-b_variable-e_none}\\%
    \includewideplot{0.4\linewidth}{fashionmnist-empire-f_6-m_0.9-at_worker-b_variable-e_none}%
    \includewideplot{0.4\linewidth}{fashionmnist-empire-f_3-m_0.9-at_worker-b_variable-e_none}%
    \caption{Impact of the batch size on the cross-accuracy of the learning on Fashion-MNIST in the non-private scenario, under the same settings (and layout) as in Figure~\ref{fig:variable-batch-spectrum}.}
    \label{fig:variable-batch-noprivacy}
\end{figure}

Hereafter, we provide some additional results on the impact of the batch size in the settings where DP is not enforced. These results validate our claim on the mild impact of the batch size and the GAR selection when no noise is injected at the worker level. In Figure~\ref{fig:variable-batch-noprivacy}, we study the impact of the batch size on the learning accuracy in a non-private setting where the workers do not care about leaking private information and thus share their gradients in the clear without adding noise to them. Unlike in Figure~\ref{fig:variable-batch-spectrum} from Section~\ref{sec:experiment-results}, all curves in all four threat scenarios are mostly flat. This indicates that the batch size and the GAR indeed become crucial factors in the HPO process whenever DP and BR are combined, and has a negligible impact when only one of these notions is enforced (refer to Figure~\ref{fig:variable-batch-noprivacy} and the \textit{No attack} curves in Figure~\ref{fig:variable-batch-spectrum}).

\subsection{Impact of $\epsilon$ with and without Byzantine workers}
We also provide below some results studying the impact of varying the privacy parameter $\epsilon$ alone under different attack scenarios. As shown in Figure~\ref{fig:variable-epsilon}, in the non-Byzantine setting (No attack), increasing $\epsilon$ (i.e., weakening the privacy guarantees) does increase the maximum cross-accuracy encountered in the learning, but at a negligible rate. All \textit{No attack} curves only show mild improvement in the cross-accuracy compared to the GARs under attacks. This means that the standard privacy-accuracy trade-off in DP is quite mild in our experiments. However, when there are Byzantine workers in the system executing state-of-the-art attacks, the impact of $\epsilon$ on the learning accuracy is more important, especially in the case of \textit{little} (first row of Figure~\ref{fig:variable-epsilon}). In some instances where the executed attack might not be strong (e.g., \textit{empire} and $f = 3$), the impact of $\epsilon$ is mild even in the Byzantine setting. 

\begin{figure}[ht]
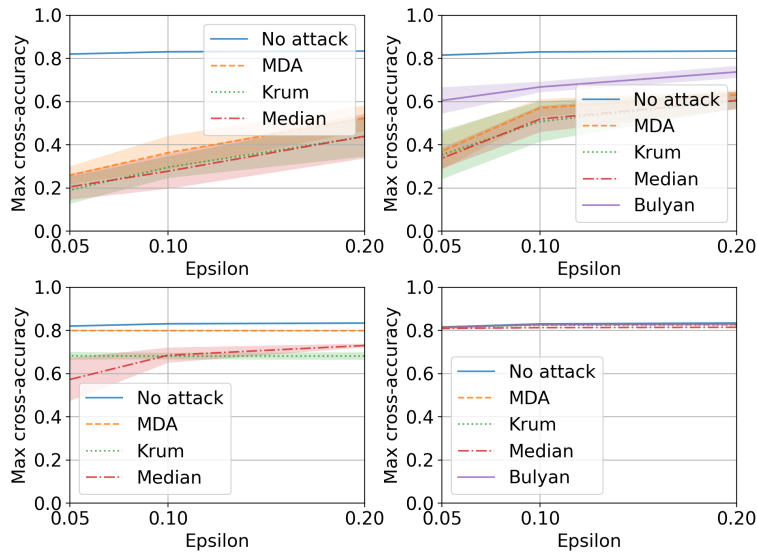

    \centering
    \includewideplot{0.3\linewidth}{fashionmnist-little-f_6-m_0.9-at_worker-b_1000-e_variable}%
    \includewideplot{0.3\linewidth}{fashionmnist-little-f_3-m_0.9-at_worker-b_1000-e_variable}\\ %
    \includewideplot{0.3\linewidth}{fashionmnist-empire-f_6-m_0.9-at_worker-b_1000-e_variable}%
    \includewideplot{0.3\linewidth}{fashionmnist-empire-f_3-m_0.9-at_worker-b_1000-e_variable}%
    \caption{Impact of $\epsilon$ on the cross-accuracy of the learning on Fashion-MNIST for $b = 1000$. The first and second rows show the \textit{little} and \textit{empire} attacks respectively, under the same settings (and layout) as in Figure~\ref{fig:variable-batch-spectrum}.}
    \label{fig:variable-epsilon}
\end{figure}

\end{document}